\newcommand{\cred}{\color{red}}
\long\def\BOC#1\EOC{\message{(Commented text )}}
\long\def\BOCC#1\EOCC{\message{(Commented text )}}
\long\def\BOCCC#1\EOCCC{\message{(Commented text )}}
\long\def\optional#1{\empty}
\long\def\NBB#1{}
\def\o{\overline}
\def\ar{\leftarrow}
\def\bi{\begin{itemize}}
\def\ei{\end{itemize}}
\def\beq{\begin{equation}}
\def\eeq#1{\label{#1}\end{equation}}
\def\ba{\begin{array}}
\def\ea{\end{array}}
\def\i#1{\hbox{\it #1\/}}
\def\sm{\rm SM}
\def\no{\i{not}}
\def\ar{\leftarrow}
\def\rar{\rightarrow}
\def\no{\i{not}}
\def\false{\hbox{\sc false}}
\def\true{\hbox{\sc true}}
\def\i#1{\hbox{\itshape #1\/}}
\def\qed{\quad \vrule height7.5pt width4.17pt depth0pt \medskip}
\DeclareSymbolFont{AMSa}{U}{msa}{m}{n}
\DeclareMathSymbol{\square}{\mathord}{AMSa}{"03}
\def\mu#1{\mathit{\underline{#1}}}
\def\lpmln{{\rm LP}^{\rm{MLN}}}
\def\bi{\begin{itemize}}
\def\ei{\end{itemize}}
\newtheorem{prop}{Proposition}
\newtheorem{thm}{Theorem}
\newtheorem{lemma}{Lemma} 
\newtheorem{example}{Example}
\begin{document}

\title{Weight Learning in a Probabilistic Extension of Answer Set Programs} 

\author{Joohyung Lee and Yi Wang\\ 
School of Computing, Informatics and Decision Systems Engineering \\
Arizona State University, Tempe, AZ, USA \\
{\tt \{joolee, ywang485\}@asu.edu}
}

\maketitle

\begin{abstract}
$\lpmln$ is a probabilistic extension of answer set programs with the weight scheme derived from that of Markov Logic. Previous work has shown how inference in $\lpmln$ can be achieved. In this paper, we present the concept of weight learning in $\lpmln$ and learning algorithms for $\lpmln$ derived from those for Markov Logic. We also present a prototype implementation that uses answer set solvers for learning as well as some example domains that illustrate distinct features of $\lpmln$ learning. Learning in $\lpmln$ is in accordance with the stable model semantics, thereby it learns parameters for probabilistic extensions of knowledge-rich domains where answer set programming has shown to be useful but limited to the deterministic case, such as reachability analysis and reasoning about actions in dynamic domains. We also apply the method to learn the parameters for probabilistic abductive reasoning about actions. 
\end{abstract}

%--------------------------------------------------------------------------------------------
\section{Introduction} 
%--------------------------------------------------------------------------------------------

$\lpmln$ is a probabilistic extension of answer set programs with the weight scheme derived from that of Markov Logic \cite{richardson06markov}. The language turns out to be highly expressive to embed several other probabilistic logic languages, such as P-log \cite{baral09probabilistic}, ProbLog \cite{deraedt07problog}, Markov Logic, and Causal Models \cite{pearl00causality}, as described in  \cite{lee15markov,lee16weighted,balai16ontherelationship,lee17lpmln}. Inference engines for $\lpmln$, such as {\sc lpmln2asp}, {\sc lpmln2mln} \cite{lee17computing}, and {\sc lpmln-models} \cite{wang17parallel}, have been developed based on the reduction of $\lpmln$ to answer set programs and Markov Logic. 

The weight associated with each $\lpmln$ rule roughly asserts how important the rule is in deriving a stable model. It can be manually specified by the user, which may be okay for a simple program, but a systematic assignment of weights for a complex program could be challenging. A solution would be to learn the weights automatically from the observed data.

With this goal in mind, this paper presents the concept of weight learning in $\lpmln$ and a few learning methods for $\lpmln$ derived from learning in Markov Logic. 
Weight learning in $\lpmln$ is to find the weights of the rules in the $\lpmln$ program such that the likelihood of the observed data according to the $\lpmln$ semantics is maximized, which is commonly known as Maximum Likelihood Estimation (MLE) in the practice of machine learning.

In $\lpmln$, due to the requirement of a stable model, deterministic dependencies are frequent. \citeauthor{poon06sound} (\citeyear{poon06sound}) noted that deterministic dependencies break the support of a probability distribution into disconnected regions, making it difficult to design ergodic Markov chains for  Markov Chain Monte Carlo (MCMC) sampling, which motivated them to develop an algorithm called {\em MC-SAT} that uses a satisfiability solver to find modes for computing conditional probabilities. Thanks to the close relationship between Markov Logic and $\lpmln$, we could adapt that algorithm to $\lpmln$, which we call {\em MC-ASP}. Unlike MC-SAT, algorithm MC-ASP utilizes ASP solvers for performing MCMC sampling, and is based on the penalty-based formulation of $\lpmln$ instead of the reward-based formulation as in Markov Logic.

Learning in $\lpmln$ is in accordance with the stable model semantics, thereby it learns parameters for probabilistic extensions of knowledge-rich domains where answer set programming has shown to be useful but limited to the deterministic case, such as reachability analysis and reasoning about actions in dynamic domains. More interestingly, we demonstrate that the method can also be applied to learn parameters for abductive reasoning about dynamic systems to associate the probability learned from data with each possible reason for the failure. 

The paper is organized as follows. Section~\ref{sec:review} reviews the language $\lpmln$, and Section~\ref{sec:lpmln-weight-learning} presents the learning framework and a gradient ascent method for the basic case, where a single stable model is given as the training data. Section~\ref{sec:extensions} presents a few extensions of the learning problem and methods, such as allowing multiple stable models as the training data and allowing the training data to be an incomplete interpretation. In addition to the general learning algorithm, Section~\ref{sec:learning-other} relates $\lpmln$ learning also to learning in ProbLog and Markov Logic as special cases, which allows for the special cases of $\lpmln$ learning to be computed by existing implementations of ProbLog and Markov Logic. Section~\ref{sec:examples}  introduces a prototype implementation of the general learning algorithm and demonstrates it with a few example domains where $\lpmln$ learning is more suitable than other learning methods. 

%--------------------------------------------------------------------------------------------
\section{Review: Language $\lpmln$}\label{sec:review}
%--------------------------------------------------------------------------------------------

The original definition of $\lpmln$ from~\cite{lee16weighted} is based on the concept of a ``reward'': the more rules are true, the larger weight is assigned to the corresponding stable model as the reward.  Alternatively, ~\citeauthor{lee17lpmln} [\citeyear{lee17lpmln}] present a reformulation in terms of a``penalty'': the more rules are false, the smaller weight is assigned to the corresponding stable model. The advantage of the latter is that it yields a translation of $\lpmln$ programs that can be readily accepted by ASP solvers, the idea that led to the implementation of $\lpmln$ using ASP solvers  \cite{lee17computing}.
Throughout the paper, we refer to this reformulation as the main definition of $\lpmln$.

We assume a first-order signature~$\sigma$ that contains no function constants of positive arity, which yields finitely many Herbrand interpretations. 
An $\lpmln$ program is a pair $\langle {\bf R}, {\bf W}\rangle$, where ${\bf R}$ is a list of rules $(R_1, \dots, R_m)$, where each rule has the form
\beq
%\small
\ba l
    A\ar B\land N
\ea 
\eeq{rule0}
where $A$ is a disjunction of atoms, $B$ is a conjunction of atoms, and $N$ is a negative formula constructed from atoms using conjunction, disjunction, and negation.\footnote{For the definition of a negative formula, see~\cite{ferraris11stable}.}
We identify rule~\eqref{rule0} with formula $B\land N\rar A$.
The expression $\{A_1\}\ar \i{Body}$, where $A_1$ is an atom, denotes the rule $A_1\ar \i{Body}\land \neg\neg A_1$. 
%fThis expression is called a ``choice rule'' in ASP. 
${\bf W}$ is a list $(w_1,\dots w_m)$ such that each $w_i$ is a real number or the symbol $\alpha$ that denotes the weight of rule $i$ in ${\bf R}$. 
We can also identify an $\lpmln$ program with the finite list of weighted rules 
$
  \{w_i: R_i \mid i\in\{1, \dots, m\}\}.
  $
A weighted rule $w:R$ is called {\em soft} if $w$ is a real number; it is called {\em hard} if $w$ is $\alpha$ (which denotes infinite weight). 
Variables range over an Herbrand Universe, which is assumed to be finite so that the ground program is finite.
For any $\lpmln$ program $\Pi$, by $gr(\Pi)$ we denote the program obtained from $\Pi$ by the process of grounding. 
Each resulting rule with no variables, which we call {\em ground instance}, receives the same weight as the original rule.

For any $\lpmln$ program $\Pi=\{w_1: R_1, \dots, w_m: R_m\}$ and any interpretation~$I$, expression $n_i(I)$ denotes the number of ground instances of $R_i$ that is false in $I$, and $\overline{\Pi}$ denotes the set of  (unweighted) formulas obtained from $\Pi$ by dropping the weight of every rule. When $\Pi$ has no variables, ${\Pi}_I$ denotes the set of weighted rules $w: R$ in $\Pi$ such that $I\models R$. 

In general, an $\lpmln$ program may even have stable models that violate some hard rules, which encode definite knowledge. However, throughout the paper, we restrict attention to $\lpmln$ programs whose stable models do not violate hard rules. 
More precisely, given an $\lpmln$ program $\Pi$, $\sm[\Pi]$ denotes the set
\[
\ba l
\{I\mid \text{$I$ is a (deterministic) stable model of $\o{gr(\Pi)_I}$} \\ 
\text{~~~~~~~~that satisfies all hard rules in $gr(\Pi)$} \} .%   $\o{\Pi^{\rm hard}}$}\}.
\ea
\]
For any interpretation $I$, its weight $W_{\Pi}(I)$ and its probability $P_\Pi(I)$ are defined as follows.
{\small
\[
W_\Pi(I) =
\begin{cases}
  exp\Bigg(-\sum\limits_{w_i:R_i\;\in\; \Pi^{\rm soft}} w_i n_i(I)\Bigg) & 
      \text{if $I\in\sm[\Pi]$}; \\
  0 & \text{otherwise},
\end{cases}
\]
}
where $\Pi^{\rm soft}$ consists of all soft rules in $\Pi$, and
\[
P_\Pi(I) = 
     \frac{W_{\Pi}(I)}
         {\underset{J\in{\rm SM}[\Pi]}{\sum}{W_{\Pi}(J)}}.
\]
An interpretation $I$ is called a {\em (probabilistic) stable model} of~$\Pi$ if $P_\Pi(I)\ne 0$.
When $\sm[\Pi]$ is non-empty, it turns out that every probabilistic stable model satisfies all hard rules, and the definitions of $W_\Pi(I)$ and $P_\Pi(I)$ above are equivalent to the original definitions \cite[Proposition~2]{lee16weighted}.

For any proposition $A$, the probability of $A$ under $\Pi$ is defined as
$
P_{\Pi}(A) = \underset{I:I\models A}{\sum} P_{\Pi}(I).
$

%--------------------------------------------------------------------------------------------
\section{$\lpmln$ Weight Learning}\label{sec:lpmln-weight-learning}
%--------------------------------------------------------------------------------------------

%--------------------------------------------------------------------------------------------
\subsection{General Problem Statement} 
%--------------------------------------------------------------------------------------------

A parameterized $\lpmln$ program $\hat{\Pi}$ is defined similarly to an $\lpmln$ program $\Pi$ except that non-$\alpha$ weights (i.e., ``soft" weights) are replaced with distinct parameters to be learned. By~$\hat{\Pi}({\bf w})$, where ${\bf w}$ is a list of real numbers whose length is the same as the number of soft rules, we denote the $\lpmln$ program obtained from~$\hat{\Pi}$ by replacing the parameters  with~${\bf w}$.
The weight learning task for a parameterized $\lpmln$ program is to find the MLE (Maximum likelihood Estimation) of the parameters as in Markov Logic.
Formally, given a parameterized $\lpmln$ program $\hat{\Pi}$ and a ground formula ${O}$ (often in the form of conjunctions of literals) called {\em observation} or {\em training data}, the $\lpmln$ parameter learning task is to find the values ${\bf w}$ of parameters such that the probability of ${O}$ under the $\lpmln$ program $\Pi$ is maximized. In other words, the learning task is to find
\beq
  \underset{{\bf w}}{\rm argmax}\ P_{\hat{\Pi}({\bf w})}(O). 
\eeq{learning}

%--------------------------------------------------------------------------------------------
\subsection{Gradient Method for Learning Weights From a Complete Stable Model}\label{sec:gradient-descent}
%--------------------------------------------------------------------------------------------

%\subsection{Gradient Method for Learning $\lpmln$ Weights}

Same as in Markov Logic, there is no closed form solution for \eqref{learning} but the gradient ascent method can be applied to find the optimal weights in an iterative manner.

We first compute the gradient. Given a (non-ground) $\lpmln$ program $\Pi$ whose $\sm[\Pi]$ is non-empty and given a stable model $I$ of $\Pi$, 
%%recall that the probability of $I$ is
%\[
%P_{\Pi}(I) = 
%   \frac{exp\bigg(-\underset{w_i\ne \alpha}{\sum}w_in_i(I)\bigg)}
%   {Z}
%\]
%where $Z$ is the normalization factor.
%
the base-$e$ logarithm of $P_{\Pi}(I)$, $ln P_{\Pi}(I)$, is
\NBB{
{\cred [$P_{\Pi}(I)$ looks a bit misleading. It looks like the gradient of $P_{\Pi}(I)$]}
}
{\small
\begin{align*}
%\nonumber ln P_{\Pi}(I) &= 
%-\underset{w_i\neq \alpha}{\sum}w_in_i(I) - ln Z\\
   -\underset{w_i: R_i\in \Pi^{\rm soft}}{\sum}w_in_i(I) - ln\!\!\! \underset{J\in \sm[\Pi]}{\sum}exp\Big(-\underset{w_i: R_i\in \Pi^{\rm soft}}{\sum}w_in_i(J)\Big).
\end{align*} 
}
The partial derivative of $ln P_{\Pi}(I) $ w.r.t. $w_i (\ne \alpha)$ is
{\tiny 
\begin{align}
\nonumber & \frac{\partial ln P_{\Pi}(I)}{\partial w_i} =
     -n_i(I) + \frac{\underset{J\in \sm[\Pi]}{\sum}exp(-\underset{w_i: R_i\in \Pi^{\rm soft}}{\sum}w_in_i(J))n_i(J)}{\underset{K\in \sm[\Pi]}{\sum}exp(-\underset{w_i: R_i\in \Pi^{\rm soft}}{\sum}w_in_i(K))}\\
\nonumber &= -n_i(I) + \underset{J\in \sm[\Pi]}{\sum}
   \Bigg(\frac{exp(-\underset{w_i: R_i\in \Pi^{\rm soft}}{\sum}w_in_i(J))}
            {\underset{K\in\sm[\Pi]}{\sum}exp(-\underset{w_i: R_i\in \Pi^{\rm soft}}{\sum}w_in_i(K))}\Bigg)
   n_i(J)\\
\nonumber &= -n_i(I) + \underset{J\in \sm[\Pi]}{\sum}P_{\Pi}(J)n_i(J) = -n_i(I) + \underset{J\in\sm[\Pi]}E[n_i(J)] 
%\underset{J\in \sm[\Pi]}{\sum}P_{\Pi}(J)n_i(J)
\end{align} 
}

\noindent
where $\underset{J\in\sm[\Pi]}E[n_i(J)]= \underset{J\in \sm[\Pi]}{\sum}P_{\Pi}(J)n_i(J)$ is
the expected number of false ground rules obtained from $R_i$.

Since the log-likelihood above is a concave function of the weights, any local maximum is a global maximum, and maximizing $P_\Pi(I)$ can be done by the standard gradient ascent method  by updating each weight $w_i$ by 
$w_i+\lambda\cdot (-n_i(I) + \underset{J\in \sm[\Pi]}{E}[n_i(J)])$ until it converges.\footnote{Note that although any local maximum is a global maximum for the log-likelihood function, there can be multiple combinations of weights that achieve the maximum probability of the training data.} 

However, similar to Markov Logic, computing $\underset{J\in \sm[\Pi]}{E}[n_i(J)]$ is intractable \cite{richardson06markov}.
In the next section, we turn to an MCMC sampling method to find its approximate value.

%--------------------------------------------------------------------------------------------
%\subsection{Gradient Based Weight Learning From Complete Evidence}\label{sec:gradient-descent}
%--------------------------------------------------------------------------------------------

%--------------------------------------------------------------------------------------------
\subsection{Sampling Method: MC-ASP}
%--------------------------------------------------------------------------------------------

The following is an MCMC algorithm for $\lpmln$, which adapts the algorithm MC-SAT for Markov Logic \cite{poon06sound} by considering the penalty-based reformulation and by using an ASP solver instead of a SAT solver for sampling.
%Assume all weights are non-positive. 

\begin{algorithm}
{\bf Input:} An $\lpmln$ program $\Pi$ whose soft rules' weights are non-positive and a positive integer $N$.

{\bf Output:} Samples $I^{1}, \dots, I^{N}$
\begin{enumerate}
\item Choose a (probabilistic) stable model $I^0$ of $\Pi$.
\item Repeat the following for $j=1,\dots, N$ 
%(probabilistic) stable models of $\Pi$ are generated:
\begin{enumerate}
\item $M\leftarrow \emptyset$;
\item {For each ground instance of each rule $w_i: R_i\in\Pi^{\rm soft}$ that is false in $I^{j-1}$}, add the ground instance to $M$ with probability $1-e^{w_i}$;
\item Randomly choose a (probabilistic) stable model $I^j$ of $\Pi$ that satisfies no rules in $M$.
\end{enumerate}
\end{enumerate}
\caption{MC-ASP}
\label{alg:mc-asp}
\end{algorithm}

When all the weights $w_i$ of soft rules are non-positive, $1-e^{w_i}$ (at step (b)) is in the range $[0, 1)$ and thus it validly represents a probability. At each iteration, the sample is chosen from stable models of $\Pi$, and consequently, it must satisfy all hard rules.
For soft rules, the higher its weight, the less likely that it will be included in $M$, and thus less likely to be not satisfied by the sample generated from $M$.

%Similar to Theorem~1 from \cite{poon06sound} for Markov Logic, 
The following theorem states that MC-ASP satisfies the MCMC criteria of ergodicity and detailed balance, which justifies the soundness of the algorithm.

\begin{thm}\label{thm:mc-sat-lpmln-pnt-correctness}
The Markov chain generated by MC-ASP satisfies ergodicity and detailed balance.\footnote{
%\begin{itemize}
%\item {\bf Ergodicity:} 
A Markov chain is {\em ergodic} if there is a number $m$ such that any state can be reached from any other state in any number of steps greater than or equal to $m$.
%Clearly, it is always possible that $M$ is an empty set. All stable models of $\Pi$ are possible to be selected when $M$ is empty set. Thus every stable model is reachable from every stable model.

%\item {\bf Detailed Balance:} 
{\em Detailed balance} means $P_{\Pi}(X)Q(X\rightarrow Y) = P_{\Pi}(Y)Q(Y\rightarrow X)$ for any samples $X$ and $Y$, where $Q(X\rightarrow Y)$ denotes the probability that the next sample is $Y$ given that the current sample is $X$. 
%For any (probabilistic) stable models $X$ and $Y$ of $\Pi$, let $Q(X\rightarrow Y)$ denote the transition probability from $X$ to $Y$ (i.e., the probabililty that the next sample is $Y$ given that the current sample is $X$), and let let $Q(X\rightarrow^{M} Y)$ denote the transition probability from $X$ to $Y$ through a particular subset $M$ of rules.  Let $Q_M(X)$ be the probability of choosing $X$ from $M$. To show $P_{\Pi}(X)Q(X\rightarrow Y) = P_{\Pi}(Y)Q(Y\rightarrow X)$, 
%\end{itemize}
}
\end{thm}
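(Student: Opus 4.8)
The plan is to recognize MC-ASP as an auxiliary-variable (slice) sampler for $P_\Pi$, mirroring the soundness argument for MC-SAT \cite{poon06sound} but carried out in the penalty-based formulation with the model space restricted to $\sm[\Pi]$. The whole argument rests on treating the set $M$ as an auxiliary variable and exhibiting a single joint distribution $P(I,M)$ over stable models $I\in\sm[\Pi]$ and constraint sets $M$ whose two conditionals are exactly steps (b) and (c). First I would rewrite $P_\Pi(I)$ in product form: writing $G$ for the set of all ground instances of soft rules and $w(g)$ for the weight of the rule $g$ comes from, one has $\sum_{w_i:R_i\in\Pi^{\rm soft}}w_in_i(I)=\sum_{g\in G,\;I\not\models g}w(g)$, so that for $I\in\sm[\Pi]$ the weight factors as $W_\Pi(I)=\prod_{g\in G,\;I\not\models g}e^{-w(g)}$, whence $P_\Pi(I)$ is proportional to $\prod_{g\in G,\;I\models g}e^{w(g)}$. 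Since every $w(g)\le 0$, the number $1-e^{w(g)}\in[0,1)$ used in step (b) is a genuine probability. I would then define
\[
P(I,M)\;\propto\;\Big(\prod_{g\in G\setminus M}e^{w(g)}\Big)\Big(\prod_{g\in M}(1-e^{w(g)})\Big),
\]
supported on pairs with $I\in\sm[\Pi]$ and every $g\in M$ false in $I$.

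The second step is to verify that this joint has the required marginal and conditionals. Summing over $M\subseteq\{g:I\not\models g\}$ factorizes as $\prod_{I\not\models g}\big(e^{w(g)}+(1-e^{w(g)})\big)=1$, so the $I$-marginal of $P(I,M)$ is exactly $P_\Pi(I)$. Dividing out, $P(M\mid I)$ reduces to $\prod_{g\in M}(1-e^{w(g)})\prod_{g\ \text{false in}\ I,\ g\notin M}e^{w(g)}$, which is precisely the independent coin-flipping of step (b). The key computation is that, after cancellation, $P(I,M)$ depends on $I$ only through the feasibility requirement that every $g\in M$ be false in $I$; consequently $P(I\mid M)$ is uniform over $\{I\in\sm[\Pi]:I\not\models g\ \text{for all}\ g\in M\}$, which is exactly the uniform choice in step (c). Invariance of $P_\Pi$ is then immediate, since one full iteration resamples $M\mid I$ and then $I\mid M$ from a common joint.

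Detailed balance follows formally once the conditionals are identified: writing $Q(X\!\to\! Y)=\sum_M P(M\mid X)\,P(Y\mid M)$ and substituting $P(M\mid X)=P(X,M)/P_\Pi(X)$ and $P(Y\mid M)=P(Y,M)/P(M)$ gives $P_\Pi(X)\,Q(X\!\to\! Y)=\sum_M P(X,M)P(Y,M)/P(M)$, which is manifestly symmetric in $X$ and $Y$. For ergodicity in the sense of the footnote (a primitive finite chain), I would observe that with positive probability step (b) produces $M=\emptyset$—this happens with probability $\prod_{g\ \text{false in}\ X}e^{w(g)}>0$ since every soft weight is a finite real number—and that given $M=\emptyset$ step (c) chooses uniformly among all of $\sm[\Pi]$; hence $Q(X\!\to\! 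Y)>0$ for every pair $X,Y\in\sm[\Pi]$ in a single step, yielding irreducibility and aperiodicity simultaneously.

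I expect the main obstacle to be the cancellation in the second step: showing that $P(I,M)$ is constant in $I$ across the feasible set, because the $e^{w(g)}$ factors from the rules true in $I$ must combine with the ``not-selected'' factors of the false-but-unselected rules to form the $I$-independent product $\prod_{g\in G\setminus M}e^{w(g)}$. This is what simultaneously makes step (c) a legitimate conditional and forces the symmetric form driving detailed balance. A secondary point needing care—and the genuine difference from MC-SAT—is that step (c) samples over stable models rather than over all satisfying assignments; I would handle this by noting that the stable-model restriction enters $P(I,M)$ only through the indicator $I\in\sm[\Pi]$, which scales every feasible $I$ uniformly and hence leaves both the marginalization and the cancellation intact.
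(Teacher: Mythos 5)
Your proposal is correct and follows essentially the same route as the paper's proof: the paper likewise reduces $P_\Pi$ to the product form $\prod_{g \text{ false in } I} e^{-w(g)}$ over soft ground rules (its Lemma on soft-rule weights), proves detailed balance by fixing $M$ and observing that $P_\Pi(X)Q(X\rightarrow^M Y)$ collapses to an expression depending only on $M$ (your cancellation step) together with $Q_M(X)=Q_M(Y)$ from the uniform choice in step 2(c), and gets ergodicity from the positive probability of $M=\emptyset$. Your auxiliary-variable packaging of the joint $P(I,M)$ is just a more structured presentation of the same computation, so no substantive difference.
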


Steps 1 and 2(c) of the algorithm require finding a probabilistic stable model of $\lpmln$, which can be computed by system {\sc lpmln2asp} \cite{lee17computing}. The system is based on the translation that turns an $\lpmln$ program $\Pi$ into an ASP program ${\sf lpmln2asp}^{\rm pnt}(\Pi)$. The translation turns each (possibly non-ground) soft rule
\beq
w_i:\ \ \ \i{Head}_i({\bf x}) \leftarrow \i{Body}_i({\bf x})
\eeq{rule}
%(where $\i{Head}_i({\bf x}) \leftarrow \i{Body}_i({\bf x})$ is in the input language of {\sc clingo}) 
into~\footnote{If $\i{Head}_i({\bf x})$ is a disjunction of atoms $a_1({\bf x})\ ;\ \dots\ ;\ a_n({\bf x})$, then ${\tt not}\ \i{Head}_i({\bf x})$ denotes ${\tt not}\ a_1({\bf x}), \dots, {\tt not}\ a_n({\bf x})$.}
\begin{align}
\nonumber {\tt unsat}(i, w_i, {\bf x}) \leftarrow \i{Body}_i({\bf x}),\ {\tt not}\ \i{Head}_i({\bf x}) \\
\nonumber \i{Head}_i({\bf x}) \leftarrow  \i{Body}_i({\bf x}),\ {\tt not}\ {\tt unsat}(i, w_i, {\bf x})\\
\nonumber :\sim {\tt unsat}(i, w_i, {\bf x}). \ \ [w_i, i, {\bf x}]
\end{align}
and each hard rule 
\[ 
\alpha: \ \ \ \i{Head}_i({\bf x}) \leftarrow \i{Body}_i({\bf x})
\]
into $\i{Head}_i({\bf x}) \leftarrow \i{Body}_i({\bf x})$.
System {\sc lpmln2asp} turns an $\lpmln$ program $\Pi$ into ${\sf lpmln2asp}^{\rm pnt}(\Pi)$ and calls ASP solver {\sc clingo} to find the stable models of ${\sf lpmln2asp}^{\rm pnt}(\Pi)$, which coincide with the probabilistic stable models of $\Pi$. The weight of a stable model can be computed from the weights recorded in ${\tt unsat}$ atoms that are true in the stable model.

Step 2(c) also requires a uniform sampler for answer sets, which can be computed by {\sc xorro} \cite{gebser16xorro}.

\begin{algorithm}[h!]
{\footnotesize
\noindent {\bf Input: }
%\begin{enumerate}
%\item 
$\Pi$: A parameterized $\lpmln$ program in the input language of {\sc lpmln2asp};
%\item 
%{\cred $I_R$: Indices of rules, specifying a subset $R=\{R_1, \dots, R_m\}$ of rules in $\Pi$ whose weights are to be learned};
%\item 
$O$: A stable model represented as a set of constraints (that is, 
$\ar \no\ A$ is in $O$ if a ground atom $A$ is true;  $\ar A$ is in $O$ if $A$ is not true);
%that specify the truth value of every ground atom in a stable model;\footnote{That is, for each atom true 
%\item 
$\delta$: a fixed real number to be used for the terminating condition.
%\end{enumerate}

\noindent {\bf Output: } $\Pi$ with learned weights.

\noindent {\bf Process:}
\begin{enumerate}
\item Initialize the weights of soft rules $R_1, \dots, R_m$ with some initial weights ${\bf w}^0$.
\item Repeat the following for $j=1,\dots$ until \mbox{${max}\{|w_i^j - w_i^{j-1}| : i=1, \dots, m\}<\delta$}:
\begin{enumerate}
\item Compute the stable model of $\Pi\cup O$ using {\sc lpmln2asp} (see below); for each soft rule $R_i$, compute $n_i(O)$ by counting ${\tt unsat}$ atoms whose first argument is $i$ ($i$ is a rule index).
\item Create $\Pi^{neg}$ by replacing each soft rule $R_i$ of the form $w:\ H({\bf x})\leftarrow B({\bf x})$ in $\Pi$ where $w>0$ with
\[
\ba l
0:\ H({\bf x})\leftarrow B({\bf x}), \\
\alpha:\ {\tt neg}(i, {\bf x}) \leftarrow B({\bf x}), {\tt not}\ H({\bf x}), \\
-w:\ \leftarrow {\tt not}\ {\tt neg}(i, {\bf x}).
\ea
\]

\item Run MC-ASP on $\Pi^{neg}$ to collect a set $S$ of sample stable models.

\item For each {soft rule $R_i$}, approximate $\underset{J\in\sm[\Pi]}{\sum}P_{\Pi}(J)n_i(J)$ with ${\underset{J\in S}{\sum}n_i(J)}/{|S|}$, where $n_i$ is obtained from counting the number of ${\tt unsat}$ atoms whose first argument is $i$. 

%{\cblu (when counting {\tt unsat} atoms, ignore {\tt unsat} atoms marked by {\tt ignore}\footnote{The reason for introducing this {\tt ignore} marker is to prevent the extra rules added change the indexing of the original rules. }).}

\item For each $i\in \{1, \dots, m\}$, \\ $w_i^{j+1}  \leftarrow w_i^{j} + \lambda\cdot (-n_i(O)+{\underset{J\in S}{\sum}n_i(J)}/{|S|})$.

%(-n_i(E)+\underset{J\in\sm[\Pi]}{\sum}P_{\Pi}(J)n_i(J))$. 
%Update the weight in the original input program $\Pi$. 
\end{enumerate}
\end{enumerate}
\caption{Algorithm for learning weights using {\sc lpmln2asp}}
%\vspace{-2mm}
\label{alg:learning}
}
\end{algorithm}

Algorithm~\ref{alg:learning} is a weight learning algorithm for $\lpmln$ based on gradient ascent using MC-ASP ({Algorithm}~\ref{alg:mc-asp}) for collecting samples.
Step 2(b) of MC-ASP requires that $w_i$ be non-positive in order for $1-e^{w_i}$ to represent a probability. Unlike in the Markov Logic setting, converting positive weights into non-positive weights cannot be done  in $\lpmln$ simply by replacing $w:F$ with $-w:\neg F$, 
due to the difference in the FOL and the stable model semantics. 
Algorithm~\ref{alg:learning} converts $\Pi$ into an equivalent program $\Pi^{neg}$ whose rules' weights are non-positive, before calling MC-ASP. 
The following theorem justifies the soundness of this 
method.\footnote{Note that $\Pi^{neg}$ is only used in MC-ASP. The output of Algorithm~\ref{alg:learning} may have positive weights.}
%that $\Pi$ and $\Pi_neg$ are equivalent. 

\begin{thm}\label{thm:pi-neg-equivalence}
When $\sm[\Pi]$ is not empty, the program $\Pi^{neg}$ specifies the same probability distribution as the program~$\Pi$.\footnote{Non-emptiness of $\sm[\Pi]$ implies that every probabilistic stable model of $\Pi$ satisfies all hard rules in~$\Pi$.}
\end{thm}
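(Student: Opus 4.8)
The plan is to exhibit a weight- and probability-preserving bijection between $\sm[\Pi]$ and $\sm[\Pi^{neg}]$. The first and main step is to show that the auxiliary atoms introduced in the conversion are \emph{explicitly defined} and therefore harmless to the stable models over the original signature~$\sigma$. For each converted soft rule $w_i: \i{Head}_i({\bf x})\ar \i{Body}_i({\bf x})$ (with $w_i>0$), the atom ${\tt neg}(i,{\bf x})$ occurs only in the head of the hard rule ${\tt neg}(i,{\bf x})\ar \i{Body}_i({\bf x}),{\tt not}\ \i{Head}_i({\bf x})$ and negatively in the body of the constraint $\ar {\tt not}\ {\tt neg}(i,{\bf x})$. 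Consequently, in any stable model of $\Pi^{neg}$ the truth value of ${\tt neg}(i,{\bf x})$ is forced by minimality: ${\tt neg}(i,{\bf x})$ is true iff $\i{Body}_i({\bf x})$ is true and $\i{Head}_i({\bf x})$ is false. I would make this precise with a splitting-set (equivalently, a conservative-extension) argument, splitting off the ${\tt neg}$ atoms as a top layer defined on top of~$\sigma$. Since the weight-$0$ copy $0:\i{Head}_i({\bf x})\ar \i{Body}_i({\bf x})$ carries exactly the same logical content as the original rule $R_i$, and since the constraint has an empty head and so cannot support any atom of~$\sigma$, the reduct of $\Pi^{neg}$ relative to any $I'$ coincides on $\sigma$ with the reduct of $\Pi$ relative to $I=I'\cap\sigma$. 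This yields the bijection $I\mapsto I'$, where $I'$ extends $I$ by the forced values of the ${\tt neg}$ atoms, mapping $\sm[\Pi]$ onto $\sm[\Pi^{neg}]$.

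The second step is a short weight computation. Let $g_i$ be the number of ground instances of $R_i$. A ground instance of the constraint $-w_i:\ \ar {\tt not}\ {\tt neg}(i,{\bf x})$ is false in $I'$ exactly when ${\tt neg}(i,{\bf x})$ is false, which by the previous step happens exactly when the corresponding ground instance of $R_i$ is \emph{satisfied} in $I$. Hence the number of false ground instances of this constraint equals $g_i-n_i(I)$. The weight-$0$ rules contribute the factor $1$ and the hard rule contributes nothing to the weight, so the soft-weight contribution of the whole triple is $\exp\!\big(w_i(g_i-n_i(I))\big)=\exp(w_i g_i)\cdot\exp(-w_i n_i(I))$, whereas the original rule contributes $\exp(-w_i n_i(I))$. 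Multiplying over all converted rules gives $W_{\Pi^{neg}}(I')=\exp\!\big(\sum_i w_i g_i\big)\cdot W_{\Pi}(I)$, where the factor $\exp(\sum_i w_i g_i)$ is a positive constant independent of $I'$.

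Finally, because every stable model of $\Pi^{neg}$ has its weight scaled by the \emph{same} constant relative to the corresponding stable model of~$\Pi$, the constant cancels in the normalization, so $P_{\Pi^{neg}}(I')=W_{\Pi^{neg}}(I')/\sum_{J'}W_{\Pi^{neg}}(J')=W_{\Pi}(I)/\sum_{J}W_{\Pi}(J)=P_{\Pi}(I)$ for each corresponding pair. Projecting onto $\sigma$ then shows that $\Pi$ and $\Pi^{neg}$ specify the same probability distribution; the hypothesis that $\sm[\Pi]$ is non-empty guarantees both that the probabilistic stable models satisfy all hard rules (so the restriction to hard-rule-satisfying $I$ is lossless) and that the normalizing constants are nonzero. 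I expect the first step --- rigorously justifying that the ${\tt neg}$ atoms form a conservative extension and that the weight-$0$ copy reproduces precisely the stable-model support of the original rule under the reduct $\o{gr(\cdot)_I}$ --- to be the main obstacle; the weight bookkeeping in the remaining steps is routine.
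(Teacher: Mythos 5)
Your proposal is correct and follows essentially the same route as the paper's proof: a splitting-theorem argument establishing that $I\mapsto I\cup\{{\tt neg}(i,{\bf x})\mid I\not\models R_i({\bf x})\}$ is a 1--1 correspondence between $\sm[\Pi]$ and $\sm[\Pi^{neg}]$ (the paper's Lemma~\ref{lem:neg-1-1}, which decomposes $\Pi^{neg}$ into the original rules, the ${\tt neg}$-defining rules, and the soft constraints), followed by the observation that the weights differ by the interpretation-independent factor $\exp(\sum_{i:w_i>0} w_i N_i)$, which cancels in the normalization. The only cosmetic difference is that you carry out the weight bookkeeping directly in the penalty-based formulation, whereas the paper computes with the reward-based weights and then invokes Proposition~2 of Lee and Wang (2016) to transfer the conclusion.
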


%--------------------------------------------------------------------------------------------
\section{Extensions} \label{sec:extensions}
%--------------------------------------------------------------------------------------------
The base case learning in the previous section assumes that the training data is a single stable model {and is a complete interpretation}. This section extends the framework in a few ways.
%--------------------------------------------------------------------------------------------
\subsection{Learning from Multiple Stable Models} \label{sec:multiple-evidence}
%--------------------------------------------------------------------------------------------
The method described in the previous section 
%Section \ref{sec:lpmln-weight-learning} 
allows only one stable model to be used as the training data. Now, suppose we have multiple stable models $I_1, \dots, I_m$ as the training data. For example, consider the parameterized program $\hat{\Pi}_{coin}$ that describes a coin, which may or may not land in the head when it is flipped,
\begin{align}
\nonumber \alpha\ \ :&\ \ \{flip\} \\
\nonumber w\ \ :&\ \ head \leftarrow flip
\end{align}
(the first rule is a choice rule) 
and three stable models as the training data: $I_1=\{flip\}$, $I_2=\{flip\}$, $I_3=\{flip, head\}$ (the absence of $head$ in the answer set is understood as landing in tail), indicating that 
$\{flip, head\}$ has a frequency of $\frac{1}{3}$, and 
$\{flip\}$ has a frequency of $\frac{2}{3}$. 
Intuitively, the more we observe the $head$, the larger the weight of the second rule. 
Clearly, learning $w$ from only one of $I_1, I_2, I_3$ won't result in a weight that captures all the three stable models: learning from each of $I_1$ or $I_2$ results in the value of $w$ too small for $\{flip, head\}$ to have a frequency of $\frac{1}{3}$ while learning from $I_3$ results  in the value of $w$ too large for $\{flip\}$ to have a frequency of $\frac{2}{3}$.

To utilize the information from multiple stable models, one natural idea is to maximize the joint probability of all the stable models in the training data, which is the product of their probabilities, i.e., 
\[
  P(I_1, \dots, I_m) = \underset{j\in\{1,\dots,m\}}{\prod}P_{\Pi}(I_j).
\]

The partial derivative of $ln P(I_1, \dots, I_m) $ w.r.t. $w_i (\ne \alpha)$ is
{\footnotesize
\begin{align*}
 \frac{\partial ln P(I_1, \dots, I_m)}{\partial w_i}
% \nonumber 
% &=
%      -\underset{j\in\{1,\dots,m\}}{\sum}n_i(I_j) + m\cdot\frac{\underset{J\in \sm[\Pi]}{\sum}exp(-\underset{w_i\neq \alpha}{\sum}w_in_i(J))n_i(J)}{\underset{J\in \sm[\Pi]}{\sum}exp(-\underset{w_i\neq \alpha}{\sum}w_in_i(J))}\\
% %
% \nonumber &= -\underset{j\in\{1,\dots,m\}}{\sum}n_i(I) + m\cdot\underset{J\in \sm[\Pi]}{\sum}
%   \Bigg(\frac{exp(-\underset{w_i\neq \alpha}{\sum}w_in_i(J))}
%             {\underset{J\in\sm[\Pi]}{\sum}exp(-\underset{w_i\neq \alpha}{\sum}w_in_i(J))}\Bigg)
%   n_i(J)\\
%
% \nonumber &= -\underset{j\in\{1,\dots,m\}}{\sum}n_i(I) + m\cdot\underset{J\in \sm[\Pi]}{\sum}P_{\Pi}(J)n_i(J) \\
% \nonumber &=
% -\underset{j\in\{1,\dots,m\}}{\sum}n_i(I) + m\cdot\underset{J\in\sm[\Pi]}E[n_i(J)]\\
=
\underset{j\in\{1,\dots,m\}}{\sum}\Big(-n_i(I_j) + \underset{J\in\sm[\Pi]}E[n_i(J)]\Big).
%\underset{J\in \sm[\Pi]}{\sum}P_{\Pi}(J)n_i(J)
\end{align*} 
}

\noindent
In other words, the gradient of the log probability is simply the sum of the gradients of the probability of each stable model in the training data. To update {Algorithm}~\ref{alg:learning} to reflect this, we simply repeat step 2(a) to compute ${n_i}(I_k)$ for each $k\in\{1, \dots, m\}$, and at step 2(e) update $w_i$ as follows:
{\footnotesize
\[
w_i^{j+1}  \leftarrow w_i^{j} + \lambda\cdot\Big(-\underset{k\in\{1, \dots, m\}}{\sum}n_i(I_k)+m\cdot\underset{J\in\sm[\Pi]}{\sum}P_{\Pi}(J)n_i(J)\Big).
\]
}

Alternatively, learning from multiple stable models can be reduced to learning from a single stable model by introducing one more argument $k$ to every predicate, which represents the index of a stable model in the training data, and rewriting the data to include the index. 
% instance, and then in the training data assigning an index for each stable model. 
\BOCCC
In this way, since there are more ground instances of the rule whose weight is to be learned, in our equation
\[
\frac{\partial ln P_{\Pi}(I)}{\partial w_i}=-n_i(I) + \underset{J\in\sm[\Pi]}E[n_i(J)],
\]
$n_i(I)$ ranges over a larger range of integers, making it more expressive in capturing how much the rule is satisfied.
\EOCCC

%Although our method itself can only learn from one stable model as the evidence, it is still possible to learn from multiple stable models by rewriting the program and evidence so that the multiple stable models are combined into one stable model. We simply introduce one more argument to every predicate, representing index of instance, and then in the evidence we assign an index for each stable model.

Formally, given an $\lpmln$ program $\Pi$ and a set of its stable models $I_1,\dots, I_m$, let $\Pi^m$ be an $\lpmln$ program obtained from $\Pi$ by appending one more argument $k$ to the list of arguments of every predicate that occurs in $\Pi$, where $k$ is a schematic variable that ranges over $\{1, \dots, m\}$. Let
%For each stable model ${\cal D}_i$ of $\Pi$, we define 
\beq
   I  = \bigcup_{i\in\{1,\dots,m\}} \{p({\bf t}, i)
   \mid p({\bf t})\in {I}_i\}.
\eeq{multiple-sm}
% Given a stable model $D$ of $\Pi^m[x]$, a set of stable models $D_1, \dots, D_m$ can be obtained from $D$ by defining
% \[
% D_i = \{p({\bf x}) \mid p({\bf x}, i)\in D\}.
% \]

The following theorem asserts that the weights of the rules in $\Pi$ that are learned from the multiple stable models $I_1, \dots, I_m$ are identical to the weights of the rules in $\Pi^m$ that are learned from the single stable model $I$ that conjoins $\{I_1,\dots, I_m\}$ as in~\eqref{multiple-sm}.
%learning weights from multiple stable models can be 

%using this technique obtains weights for the original program that maximize the product of the probabilities of all the training examples.

\begin{thm}\label{thm:multiple-evidence}
%Given an $\lpmln$ program $\Pi$, we construct $\Pi'$ by appending one more argument $x$ to every predicate that occurs in $\Pi$, where $x$ is a schematic variable that ranges over $\{1, \dots, m\}$ for some integer $m\geq 1$. Given a formula $F$ of atoms in $\Pi$, we use $F(x=x')$ to denote the formula obtained from $F$ by appending one argument $x'$ to every predicate that occurs in $F$. We have
% For any 
% $w_i:F_i\in \Pi^{soft}$, we have
% \[
% \underset{w_i}{argmax}P_{\Pi^m[x]}(D) = \underset{w_i}{argmax}\underset{i\in\{1, \dots, m\}}{\prod}P_{\Pi}(D_i)
% \]
For any parameterized $\lpmln$ program $\hat{\Pi}$, its stable models $I_1,\dots, I_m$ and $I$ as defined as in~\eqref{multiple-sm}, we have
\[
\underset{{\bf w}}{\rm argmax}\ P_{{\hat{\Pi}^m}({\bf w})}(I)  =
\underset{{\bf w}}{\rm argmax}\ 
    \underset{i\in\{1, \dots, m\}}{\prod}P_{\hat{\Pi}({\bf w})}(I_i).
\]
\end{thm}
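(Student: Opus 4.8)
The plan is to prove the stronger pointwise statement that $P_{\hat{\Pi}^m({\bf w})}(I) = \prod_{i=1}^m P_{\hat{\Pi}({\bf w})}(I_i)$ holds for \emph{every} weight vector ${\bf w}$; the equality of the two $\mathrm{argmax}$ sets is then immediate, since the two objective functions coincide everywhere. Throughout I fix an arbitrary ${\bf w}$ and abbreviate $\Pi = \hat{\Pi}({\bf w})$ and $\Pi^m = \hat{\Pi}^m({\bf w})$.

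The crux is to understand $\sm[\Pi^m]$. Since the single index variable $k$ is appended to \emph{every} predicate of \emph{every} rule, each ground instance of a rule in $\Pi^m$ mentions only atoms carrying one common index value; hence $gr(\Pi^m)$ is the disjoint union, over $k\in\{1,\dots,m\}$, of $m$ isomorphic copies of $gr(\Pi)$ whose signatures (atoms tagged by distinct $k$) are pairwise disjoint. First I would make this precise by introducing, for each $k$, the tagging map $p({\bf t})\mapsto p({\bf t},k)$ and the slicing map $J\mapsto J|_k = \{p({\bf t})\mid p({\bf t},k)\in J\}$. Because the copies share no atoms, both the reduct $\o{gr(\cdot)_{(\cdot)}}$ and the deterministic stable-model condition decompose componentwise: by a splitting/module argument for the (Ferraris-style) stable-model semantics, $J\in\sm[\Pi^m]$ if and only if $J|_k\in\sm[\Pi]$ for every $k$, so that $J\mapsto(J|_1,\dots,J|_m)$ is a bijection from $\sm[\Pi^m]$ onto $\sm[\Pi]^m$. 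The same vocabulary disjointness transfers the hard-rule side condition slicewise.

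Next I would record the counting identity. A ground instance of a soft rule $R_i$ with index $k$ is falsified by $J$ exactly when the corresponding ground instance of $R_i$ in $\Pi$ is falsified by $J|_k$, so $n_i(J) = \sum_{k=1}^m n_i(J|_k)$ for each soft $R_i$ (the left side computed in $\Pi^m$, each summand in $\Pi$). Substituting into the weight definition gives $W_{\Pi^m}(J) = exp\big(-\sum_i w_i \sum_k n_i(J|_k)\big) = \prod_{k=1}^m W_\Pi(J|_k)$. Summing over $\sm[\Pi^m]$ and using the bijection onto the product $\sm[\Pi]^m$, the normalization constant factorizes as the $m$-fold product of $\sum_{J'\in\sm[\Pi]}W_\Pi(J')$ (a product of sums equals the sum over the product set of the products). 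Dividing yields $P_{\Pi^m}(J) = \prod_{k=1}^m P_\Pi(J|_k)$. Specializing to $J=I$, whose slice $I|_k$ is precisely $I_k$ by \eqref{multiple-sm}, gives $P_{\Pi^m}(I) = \prod_k P_\Pi(I_k)$, the desired pointwise identity.

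I expect the main obstacle to be the componentwise decomposition of $\sm[\Pi^m]$ in the second paragraph. The stable-model semantics is nonmonotonic, so such factorization is not automatic; one must verify that appending the common index genuinely produces non-interacting modules over disjoint signatures and then invoke an appropriate splitting theorem for the stable-model operator. The remaining steps — the counting identity and the algebraic factorization of the weights and of the normalization constant — are routine once the bijection $\sm[\Pi^m]\cong\sm[\Pi]^m$ is established.
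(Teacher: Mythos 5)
Your proposal is correct, and it rests on the same core idea as the paper's proof: reduce the argmax statement to the pointwise identity $P_{\hat{\Pi}^m({\bf w})}(I)=\prod_{i}P_{\hat{\Pi}({\bf w})}(I_i)$ for every ${\bf w}$, which in turn follows from the fact that appending the index $k$ makes $gr(\hat{\Pi}^m({\bf w}))$ a union of $m$ copies of $gr(\hat{\Pi}({\bf w}))$ over pairwise disjoint vocabularies. The difference is in how the factorization is discharged. The paper argues by induction on $m$, peeling off the $k=m$ copy at each step and invoking, as a black box, the probabilistic splitting theorem for $\lpmln$ (Corollary~3 of \cite{wang18splitting}, applicable because the two parts are independent programs in the sense of their Definition~12); that corollary directly yields $P_{\hat{\Pi}^{m}({\bf w})}(D^{m}) = P_{\hat{\Pi}^{m-1}({\bf w})}(D^{m-1})\cdot P_{\hat{\Pi}({\bf w})[x=m]}(D_{m})$. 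You instead establish the bijection $\sm[\Pi^m]\cong\sm[\Pi]^m$ from the deterministic splitting theorem alone and then derive the probabilistic factorization by hand, via the counting identity $n_i(J)=\sum_k n_i(J|_k)$, the resulting product form of $W_{\Pi^m}$, and the factorization of the normalization constant. Your route is more self-contained (it only needs classical ASP splitting plus elementary algebra, not the $\lpmln$-specific splitting machinery) and handles all $m$ slices at once; the paper's route is shorter on the page precisely because it imports the stronger probabilistic result. The one step you rightly flag as needing care --- that the reduct and the stable-model check genuinely decompose over the disjoint tagged signatures, including the hard-rule side condition in the definition of $\sm[\cdot]$ --- is exactly the content the paper outsources to the cited splitting theorem, so your proof is sound provided that lemma is worked out or cited.
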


\begin{example}\label{ex:coin}
For the program $\hat{\Pi}_{coin}$, to learn from the three stable models $I_1$, $I_2$, and $I_3$ defined before, we consider the program $\hat{\Pi}_{coin}^3$ 
\begin{align}
\nonumber \alpha\ \ :&\ \ \{flip(k)\}.\\
\nonumber w\ \ :&\ \ head(k) \leftarrow flip(k).
\end{align}
($k\in\{1,2,3\}$)
and combine $I_1, I_2, I_3$ into one stable model $I=\{flip(1), flip(2), flip(3), head(3)\}$. 
%Now, $n_2(I)$ (the number of ground instances from the second rule that are false under $I$) ranges over  $\{0, 1, 2, 3\}$ (instead of $\{0, 1\}$ as for $\Pi_{coin}$), and it can reflect the learning from all three instances. 
The weight $w$ in $\hat{\Pi}_{coin}^3$ learned from the single data $I$ is identical to the weight $w$ in $\hat{\Pi}_{coin}$ learned from the three stable models $I_1,I_2,I_3$.
\end{example}
% all the situations about how the second rule is satisfied.

\subsection{Learning in the Presence of Noisy Data} 

\NBB{This section needs to be rewritten; it does not conform to the assumption} 

So far, we assumed that the data $I_1,\dots, I_m$ are (probabilistic) stable models of the parameterized $\lpmln$ program. 
%The above equation indicates that, in order to learn meaningful weights, each of $I_1, \dots, I_m$ must be a (probabilistic) stable model of $\Pi$. 
Otherwise, the joint probability would be zero regardless of any weights assigned to the soft rules, and the partial derivative of $ln P(I_1,\dots, I_m)$ is undefined. 
%{\cred so that maximizing the probability of training data does not provide a guidance for weight learning.} 
However, data gathered from the real world could be noisy, so some data $I_i$ may not necessarily be a stable model. Even then, we still want to learn from the other ``correct" instances. We may drop them in the pre-processing to learning but this could be computationally expensive if the data is huge. Alternatively, we may mitigate the influence of the noisy data by introducing so-called ``noise atoms'' as follows.
\begin{example}
Consider again the program $\hat{\Pi}_{coin}^m$. 
Suppose one of the interpretations $I_i$ in the training data is $\{head(i)\}$.  The interpretation is not a stable model of $\hat{\Pi}_{coin}^m$. 
%{\cred so learning with this set of interpretations as the training data will not change the weight in any {\cblu direction}.}
We obtain $\hat{\Pi}^m_{noisecoin}$ by modifying  $\hat{\Pi}_{coin}^m$ to allow for the noisy atom $n(k)$ as follows.
%The following program $\Pi^m_{noisecoin}$ assigns this interpretation a very low weight (depending on how large is the value of $u$) so that $\{head(i), n(i)\}$ is a stable model of $\Pi^m_{noisecoin}$ with a very low probability.
{%\footnotesize
\begin{align}
\nonumber \alpha\ \ :&\ \ \{flip(k)\}.\\
\nonumber w\ \ :&\ \ head(k) \leftarrow flip(k).\\
\nonumber \alpha \ \ :&\ \ head(k) \leftarrow n(k).\\
\nonumber -u\ \ :&\ \ n(k).
\end{align}
}

Here, $u$ is a positive number that is ``sufficiently'' larger than~$w$. %Suppose one of the interpretations $I_i$ in the training data satisfies $head(i)$ but not $flip(i)$.  The interpretation is not a stable model of $\Pi_{coin}^m$ but 
$\{head(i), n(i)\}$ is a stable model of $\hat{\Pi}^m_{noisecoin}$, so that the combined training data $I$ is still a stable model, and thus a meaningful weight $w$ for $\hat{\Pi}^m_{noisecoin}$ can still be learned, given that other ``correct'' instances $I_j$ ($j\ne i$) dominate in the learning process (as for the noisy example, the corresponding stable model gets a low weight due to the weight assigned to $n(i)$ but not 0).

Furthermore, with the same value of $w$, the larger $u$ becomes, the closer the probability distribution defined by $\hat{\Pi}^m_{noisecoin}$ approximates the one defined by $\hat{\Pi}_{coin}^m$, so the value of $w$ learned under $\hat{\Pi}^m_{noisecoin}$ approximates the value of $w$ learned under $\hat{\Pi}_{coin}^m$ where the noisy data is dropped.
\end{example}
%[[Move 4.2 upfront?]]

\NBB{This is as if throw away $I_i$? but withoutknowing which is noisy beforehand, we should use our method, or can we do filtering beforehand? }

%--------------------------------------------------------------------------------------------
\subsection{Learning from Incomplete Interpretations}\label{ssec:partial}
%--------------------------------------------------------------------------------------------

In the previous sections, we assume that the training data is given as a (complete) interpretation, i.e., for each atom it specifies whether it is true or false. In this section, we discuss the general case when the training data is given as a partial interpretation, which omits to specify some atoms to be true or false, or more generally when the training data is in the form of a formula that more than one stable model may satisfy. 
% formula that may be satisfied by many interpretations. 

%--------------------------------------------------------------------------------------------
%\subsection{Gradient Ascent with Partial Evidence}
%---------------'-----------------------------------------------------------------------------
Given a non-ground $\lpmln$ program $\Pi$ such that $\sm[\Pi]$ is not empty and given a ground formula $O$ as the training data, we have
\[
  P_\Pi(O) = \frac{ \sum_{I\models O, I\in\sm[\Pi]} W_\Pi(I)}
      {\sum_{J\in{\rm SM}[\Pi]}{W_{\Pi}(J)}}.
\] 

The partial derivative of $ln P_\Pi(O)$ w.r.t. $w_i$ ($\ne \alpha$) turns out to be
\[
\frac{\partial ln P_{\Pi}(O)}{\partial w_i} = -\underset{I\models O, I\in \sm[\Pi]}{E}[n_i(I)] + \underset{J\in \sm[\Pi]}{E}[n_i(J)].
\]

\BOC
A gradient descent based learning algorithm for partial interpretation as evidence is as follows:

{\bf Input: $\Pi$(program), $O$(Evidence), $\delta$(Terminating condition), $\lambda$(Learning Rate)}
\begin{enumerate}
\item Initialize the weights of $U=\{R_1, \dots, R_m\}$ with some ${\bf w}^0=\{w_1^0, \dots, w_m^0\}$;
\item Repeat the follows until $\underset{j=1,\dots, m}{max}\{|w_j^i - w_j^{i-1}|\}<\delta$:
\begin{enumerate}
\item For each $j\in\{1, \dots, m\}$, compute an approximation of $\underset{J\in \sm[\Pi]}{E}[n_j(J)]$, denoted by $\tilde{E}(n_j)$;
\item For each $j\in\{1, \dots, m\}$, compute an approximation of $\underset{I\models O, I\in \sm[\Pi]}{E}[n_j(I)]$, denoted by $\tilde{E_O}(n_j)$;
\item For each $j\in\{1, \dots, m\}$, $w_j^{i+1} \leftarrow w_j^i+\lambda\cdot (-\tilde{E_O}(n_j)+\tilde{E}(n_j))$
\end{enumerate}
\end{enumerate}
\EOC
It is straightforward to extend {Algorithm}~\ref{alg:learning} to reflect the extension. 
%Each of $\tilde{E}(n_j)$ and $\tilde{E_O}(n_j)$ can be obtained with any sampling method. 
Computing the approximate value of the first term 
$-\underset{I\models O, I\in \sm[\Pi]}{E}[n_i(I)]$ can be done by sampling on $\Pi^{neg}\cup O$.

%--------------------------------------------------------------------------------------------

%--------------------------------------------------------------------------------------------
\section{$\lpmln$ Weight Learning via Translations to Other Languages}\label{sec:learning-other}
%--------------------------------------------------------------------------------------------

This section considers two fragments of $\lpmln$, for which the parameter learning task reduces to the same tasks for Markov Logic and  ProbLog.

%--------------------------------------------------------------------------------------------
\subsection{Tight $\lpmln$ Program: Reduction to MLN Weight Learning}
%--------------------------------------------------------------------------------------------

By Theorem 3 in \cite{lee16weighted}, any tight $\lpmln$ program can be translated into a Markov Logic Network (MLN) by adding completion formulas \cite{erd03} with the weight $\alpha$. This means that the weight learning for a tight $\lpmln$ program can be reduced to the weight learning for an MLN. 

Given a tight $\lpmln$ program $\Pi = \langle {\bf R}, {\bf W}\rangle$ and one (not necessarily complete) interpretation $E$ as the training data, the MLN $Comp(\Pi)$ is obtained by adding completion formulas  with weight $\alpha$ to $\Pi$.

The following theorem %Theorem \ref{thm:lpmln-learn-to-mln-learn} 
tells us that the weight assignment that maximizes the probability of the training data under $\lpmln$ programs is identical to the weight assignment that maximizes the probability of the same training data under an MLN $Comp(\Pi)$. 

\begin{thm}\label{thm:lpmln-learn-to-mln-learn}
Let ${\rm L}$ be the Markov Logic Network $Comp(\Pi)$ and let $E$ be a ground formula (as the training data).
When $\sm[\Pi]$ is not empty, 
\[
\underset{{\bf {\bf w}}}{\rm argmax}\ P_{\hat{\Pi}({\bf w})}(E) = 
\underset{{\bf w}}{\rm argmax}\ P_{\hat{\rm L}({\bf w})}(E). 
%\vspace{-2mm}
\]
($\hat{\rm L}$ is a parameterized MLN obtained from ${\rm L}$.) 
%where $P_{Comp(\Pi)}(E ; {\bf W})$ is computed following MLN semantics.
\end{thm}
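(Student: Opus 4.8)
The plan is to reduce the claimed equality of argmax-sets to the stronger and cleaner statement that the two parameterized families induce identical probability distributions. First I would observe that it suffices to show $P_{\hat{\Pi}({\bf w})}(E) = P_{\hat{\rm L}({\bf w})}(E)$ for \emph{every} weight vector ${\bf w}$: if these two functions of ${\bf w}$ coincide everywhere, then they have the same maximizers, which is exactly the asserted equality of argmax-sets. Since $P(E)$ is obtained by summing $P(I)$ over the interpretations $I\models E$, the real content is pointwise-in-${\bf w}$ agreement of the distributions $P_{\hat{\Pi}({\bf w})}$ and $P_{\hat{\rm L}({\bf w})}$ over all interpretations.

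To obtain that agreement I would fix an arbitrary ${\bf w}$ and appeal to Theorem 3 of \cite{lee16weighted}, which states that a tight $\lpmln$ program and the MLN formed by adding its completion formulas as hard ($\alpha$-weighted) rules specify the same probability distribution. Two supporting observations make the appeal uniform in ${\bf w}$. First, tightness is a purely structural property of $\overline{\Pi}$ and is independent of the numbers substituted for the soft parameters, so $\hat{\Pi}({\bf w})$ is tight for every ${\bf w}$. Second, the completion construction commutes with instantiating the parameters: the added completion formulas always carry the hard weight $\alpha$ and do not depend on ${\bf w}$, while the soft rules of $\Pi$ map to soft MLN formulas carrying the very same parameters, so $\hat{\rm L}({\bf w}) = Comp(\hat{\Pi}({\bf w}))$. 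Applying Theorem 3 to $\hat{\Pi}({\bf w})$ then yields $P_{\hat{\Pi}({\bf w})} = P_{\hat{\rm L}({\bf w})}$, and summing over $I\models E$ gives the desired equality.

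The main obstacle I anticipate is conceptual rather than computational: Theorem 3 of \cite{lee16weighted} is stated for the original reward-based semantics of $\lpmln$, whereas this paper adopts the penalty-based reformulation, so the delicate point is to transport the result into the present setting without altering the induced distribution. This is precisely where the hypothesis $\sm[\Pi]\neq\emptyset$ is used: as recalled in the review section, non-emptiness forces the penalty-based and reward-based definitions of $W_\Pi$ and $P_\Pi$ to coincide (Proposition 2 of \cite{lee16weighted}) and guarantees that every probabilistic stable model satisfies all hard rules, so the hard completion formulas are jointly satisfiable and the MLN side assigns nonzero probability to exactly the same interpretations. Invoking this equivalence at the right moment is the crux; the remaining bookkeeping in the commutation observation is routine.
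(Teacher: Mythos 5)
Your proposal is correct and follows the same route as the paper, which simply states that the theorem ``easily follows from Theorem 3 in \cite{lee16weighted}''; you have filled in the details the paper leaves implicit (pointwise-in-${\bf w}$ equality of the two distributions implies equality of the argmax sets, plus the role of $\sm[\Pi]\neq\emptyset$ in reconciling the penalty-based and reward-based semantics). No discrepancy with the paper's argument.
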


Thus we may learn the weights of a tight $\lpmln$ program using the existing implementations of Markov Logic, such as {\sc alchemy} and {\sc tuffy}.
%--------------------------------------------------------------------------------------------
\subsection{Coherent $\lpmln$ Program: Reduction to Parameter Learning in ProbLog}
%--------------------------------------------------------------------------------------------

\NBB{needs to make it compact}

For another special class of $\lpmln$ programs, {weight} learning can be reduced to {weight} learning in ProbLog \cite{fierens13inference}.
% a probabilistic logic programming language.

We say an $\lpmln$ program $\Pi$ is {\em simple} if all soft rules in $\Pi$ are of the form
\[
   w: A
\]
where $A$ is an atom, and no atoms occurring in the soft rules occur in the head of a hard rule.

%We call the {\cblu list} of atoms that occur in any soft rule of $\Pi$ as $CA(\Pi)$ (``choice atoms'').

We say a simple $\lpmln$ program $\Pi$ is {\em $k$-coherent} ($k>0$) if, for any truth assignment to atoms that occur in $\Pi^{\rm soft}$, there are exactly $k$ probabilistic stable models of $\Pi$ that satisfies the truth assignment. We also apply the notion of $k$-coherency when $\Pi$ is parameterized.

Without loss of generality, we assume that no atom occurs more than once in $\Pi^{\rm soft}$. (If one atom $A$ occurs in multiple rules $w_1: A, \dots, w_n:A$, these rules can be combined into $w_1+\dots+w_n:A$.) A $k$-coherent $\lpmln$ program $\Pi$ can thus be identified with the tuple $\langle PF, \Pi^{\rm hard}, {\bf w}\rangle$, where $PF=(pf_1, \dots, pf_m)$ is a list of (possibly non-ground) atoms that occur as soft rules in $\Pi$, $\Pi^{\rm hard}$ is a set of hard rules in $\Pi$, and ${\bf w} = (w_1, \dots, w_m)$ is the list of soft rule's weights, where $w_i$ is the weight of $pf_i$.

%It can be seen that the fact that a simple $\lpmln$ program $\langle CA(\Pi), \Pi^{\rm hard}, {\bf W}\rangle$ is coherent does not depend on the weight vector ${\bf W}$, and thus we say an $\lpmln$ program template $\langle CA(\Pi), \Pi^{\rm hard}\rangle$ is a coherent $\lpmln$ program template if $\langle CA(\Pi), \Pi^{\rm hard},{\bf W} \rangle$ is coherent for any weight vector ${\bf W}$.

A ProbLog program can be viewed as a tuple $\langle PF, {\bf R}, {\bf pr}\rangle$ where $PF$ is a list of atoms called {\em probabilistic facts}, ${\bf R}$ is a set of rules such that no atom that occurs in $PF$ occurs in the head of any rule in ${\bf R}$, and ${\bf pr}$ is a list $(p_1, \dots, p_{|PF|})$, where each $p_i$ is the probability of probabilistic atom $pf_i\in PF$ . A {\em parameterized} ProbLog program is similarly defined, where ${\bf pr}$ is a list of parameters to be learned. 

%a pair $\langle PF, {\bf R}\rangle$, so that given any probability vector ${\bf PR}$ for probabilistic atoms in $PF$, $\langle PF, {\bf R}, {\bf PR}\rangle$ is a {\sc problog} program.

%A pair $\langle PF, P\rangle$, where $PF$ is a set of atoms and $P$ is a set of rules such that no atoms in $PF$ occurs in any head, can be viewed a simple $\lpmln$ program template or a {\sc problog} program template. 
Given a list of probabilities ${\bf pr}=(p_1, \dots, p_n)$, we construct a list of weights ${\bf w}^{\bf pr}=(w_1, \dots, w_n)$ as follows:
\begin{equation}\label{eq:probability2weight}
w_i = ln(\frac{p_i}{1-p_i}) 
\end{equation}
for $i\in \{1, \dots n\}$. 

The following theorem asserts that
%coherent $\lpmln$ programs and {\sc problog} programs can be turned into each other. Further, 
weight learning on a 1-coherent $\lpmln$ program can be done by weight learning on its corresponding ProbLog program. 

%\NB{ better as Lemma? }

\begin{thm}\label{cor:lpmln-learn2problog-learn}
For any 1-coherent parameterized $\lpmln$ program $\langle PF, P, {\bf w}\rangle$ and any interpretation $T$ (as the training data), we have 
\begin{align}
\nonumber &{\bf w} = \underset{{\bf w}}{\rm argmax}\ P_{\langle PF, P, {\bf w}\rangle}(T)\\
\nonumber &\text{if and only if}\\
\nonumber &{\bf w} = {\bf w}^{\bf pr}\text{ and } {\bf pr} = \underset{{\bf pr}}{\rm argmax}\ P_{\langle PF, P, {\bf pr}\rangle}(T).
\end{align}
\end{thm}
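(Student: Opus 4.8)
The plan is to prove the stronger statement that the two programs define the \emph{same} probability distribution whenever ${\bf w}={\bf w}^{\bf pr}$, and then to read off the stated equality of maximizers from the fact that the map ${\bf pr}\mapsto{\bf w}^{\bf pr}$ given by~\eqref{eq:probability2weight} is a bijection. Concretely, I would first establish the lemma that for every interpretation $I$,
\[
P_{\langle PF, P, {\bf w}^{\bf pr}\rangle}(I) \;=\; P_{\langle PF, P, {\bf pr}\rangle}(I),
\]
where the left-hand side is the $\lpmln$ probability and the right-hand side is the ProbLog probability. Since $T$ is a ground formula, $P(T)=\sum_{I\models T}P(I)$, so pointwise equality of the distributions immediately yields $P_{\langle PF,P,{\bf w}^{\bf pr}\rangle}(T)=P_{\langle PF,P,{\bf pr}\rangle}(T)$ for every training datum $T$.

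To prove the lemma I would compute both sides directly from the semantics. Because the program is simple, each soft rule is an atom $pf_i$, so $n_i(I)=1$ when $pf_i\notin I$ and $n_i(I)=0$ otherwise; hence $W_{\Pi}(I)=\prod_{i:\,pf_i\notin I}e^{-w_i}$ for $I\in\sm[\Pi]$. By $1$-coherence the stable models of $\Pi$ are in one-to-one correspondence with the $2^{|PF|}$ truth assignments to the atoms in $PF$, and the stable model selected for a given assignment coincides with the ProbLog model for the same total choice because the two programs share the rules $P$. Summing the weight over all stable models therefore factorizes as $\sum_{J\in\sm[\Pi]}W_{\Pi}(J)=\prod_{i}(1+e^{-w_i})$, and dividing gives
\[
P_{\Pi}(I)=\prod_{i:\,pf_i\in I}\frac{1}{1+e^{-w_i}}\;\prod_{i:\,pf_i\notin I}\frac{e^{-w_i}}{1+e^{-w_i}}.
\]
Substituting $w_i=\ln\!\frac{p_i}{1-p_i}$ turns the two factors into $p_i$ and $1-p_i$, respectively, so $P_{\Pi}(I)$ equals $\prod_{i:\,pf_i\in I}p_i\prod_{i:\,pf_i\notin I}(1-p_i)$, which is exactly the ProbLog probability of $I$. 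This proves the lemma.

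Finally, I would conclude the argmax equivalence. The coordinatewise logit ${\bf pr}\mapsto{\bf w}^{\bf pr}$ is a strictly increasing bijection from $(0,1)^{|PF|}$ onto $\mathbb{R}^{|PF|}$, and by the lemma the objective ${\bf pr}\mapsto P_{\langle PF,P,{\bf pr}\rangle}(T)$ is the composition of the inverse of this bijection with ${\bf w}\mapsto P_{\langle PF,P,{\bf w}\rangle}(T)$. Maximizers are preserved under a bijective reparametrization, so ${\bf w}^{*}$ maximizes the $\lpmln$ likelihood over $\mathbb{R}^{|PF|}$ if and only if ${\bf pr}^{*}$, the unique preimage of ${\bf w}^{*}$ under the logit, maximizes the ProbLog likelihood over $(0,1)^{|PF|}$ while ${\bf w}^{*}={\bf w}^{{\bf pr}^{*}}$; this is precisely the claimed ``if and only if.''

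The step I expect to be the main obstacle is the correspondence between stable models and total choices used in the factorization, since it is what licenses writing the partition function as the independent product $\prod_i(1+e^{-w_i})$; this is exactly where the $1$-coherence hypothesis, together with the simplicity assumption that the $pf_i$ do not occur in heads of hard rules, is indispensable, and it must be invoked carefully to guarantee both that every truth assignment contributes exactly one stable model and that the selected $\lpmln$ model matches the ProbLog model atom for atom. A secondary subtlety worth flagging is boundary behavior: if the ProbLog optimum lies on the boundary $p_i\in\{0,1\}$ it corresponds to $w_i=\pm\infty$ with no real-valued image under the logit, so the equivalence of maximizers should be understood over the interior on which both optima are attained.
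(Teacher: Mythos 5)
Your proposal is correct and takes essentially the same route as the paper: the paper's entire proof is to state the lemma that $\langle PF,P,{\bf w}^{\bf pr}\rangle$ and $\langle PF,P,{\bf pr}\rangle$ induce the same distribution over interpretations (deferring its proof to earlier work) and then observe that the argmax equivalence follows via the bijectivity of the logit reparametrization. Your explicit derivation of $P_\Pi(I)$ with partition function $\prod_i(1+e^{-w_i})$ and the substitution $e^{-w_i}=(1-p_i)/p_i$ simply fills in the computation the paper cites, and matches the calculations the appendix carries out for the $k$-coherent case.
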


%\NB{why no k=1? }

According to the theorem, %\ref{cor:lpmln-learn2problog-learn} tells us, 
to learn the weights of a 1-coherent $\lpmln$ program, we can simply construct the corresponding ProbLog program, perform ProbLog weight learning, and then turn the learned probabilities into $\lpmln$ weights according to \eqref{eq:probability2weight}.

%{\cblu [[Review problog in section 2?]]}

%{\cblu [[For coherent $\lpmln$ program, weight learning for each rule can be done independently]]}

In~\cite{lee18aprobabilistic}, $k$-coherent programs are shown to be useful for describing dynamic domains. Intuitively, each probabilistic choice leads to the same number of histories. For such a $k$-coherent $\lpmln$ program, weight learning given a complete interpretation as the training data can be done by simply counting true and false ground instances of soft atomic facts in the given interpretation.

For an interpretation $I$ and $c_i\in PF$, let $m_i(I)$ and $n_i(I)$ be the numbers of ground instances of $c_i$ that is true in $I$ and false in $I$, respectively.

\begin{thm}\label{cor:coherent-learning}
For any $k$-coherent parameterized $\lpmln$ program $\langle PF, \Pi^{\rm hard}, {\bf w}\rangle$, and any (complete) interpretation $I$ (as  the training data), we have
{\small 
\[
\underset{\bf w}{\rm argmax}\ P_{\langle PF, \Pi^{\rm hard}, {\bf w}\rangle}(I; {\bf w}) =\Big(ln\frac{m_1(I)}{n_1(I)}, \dots, ln\frac{m_{|PF|}(I)}{n_{|PF|}(I)}\Big).
\]
}
\end{thm}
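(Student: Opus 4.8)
The plan is to turn the maximization into a \emph{separable, concave} optimization by exploiting the product structure that $k$-coherence forces on the distribution over the soft atoms. First I would record that each soft rule is a weighted atom $w_i : c_i$, so a ground instance of $c_i$ is false in $I$ exactly when the corresponding ground atom is false; hence the count $n_i(I)$ appearing in the weight formula agrees with the $n_i(I)$ of the statement, and for every $I\in\sm[\Pi]$ we have $W_\Pi(I)=\exp\!\big(-\sum_i w_i n_i(I)\big)$. Writing $N_i=m_i(I)+n_i(I)$ for the total number of ground instances of $c_i$ (a quantity independent of $I$), the whole problem is controlled by these counts.

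The key step is to factor the partition function. By $k$-coherence, every truth assignment $\sigma$ to the ground soft atoms extends to exactly $k$ probabilistic stable models, and since the soft atoms fully determine the penalty, all $k$ of them share the same weight $\exp\!\big(-\sum_i w_i n_i(\sigma)\big)$. Therefore $Z=\sum_{J\in\sm[\Pi]}W_\Pi(J)=k\sum_\sigma \exp\!\big(-\sum_i w_i n_i(\sigma)\big)$, and because the exponent is a sum of independent per-ground-atom contributions, the sum over $\sigma$ splits into $Z=k\prod_i(1+e^{-w_i})^{N_i}$. The same cancellation of the constant factor $k$ shows that the marginal distribution that $P_\Pi$ induces on $\sigma$ is the product distribution in which each ground instance of $c_i$ is independently false with probability $e^{-w_i}/(1+e^{-w_i})$.

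With this factorization in hand I would specialize the gradient already derived in the gradient-method section, $\partial \ln P_\Pi(I)/\partial w_i = -n_i(I)+\sum_{J\in\sm[\Pi]}P_\Pi(J)n_i(J)$. Using the product distribution and linearity of expectation, $\sum_{J\in\sm[\Pi]}P_\Pi(J)n_i(J)=N_i\cdot e^{-w_i}/(1+e^{-w_i})$, so setting the gradient to zero gives $N_i\cdot e^{-w_i}/(1+e^{-w_i})=n_i(I)$. Solving yields $e^{w_i}=(N_i-n_i(I))/n_i(I)=m_i(I)/n_i(I)$, i.e.\ $w_i=\ln\big(m_i(I)/n_i(I)\big)$, which is the claimed component. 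Since the log-likelihood is concave in the weights (already noted earlier, and here visibly separable with each term $-N_i\ln(1+e^{-w_i})$ concave and the rest linear), this stationary point is the global maximizer, so the full argmax is the stated tuple.

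I expect the main obstacle to be the factorization step: carefully justifying that summing $W_\Pi$ over stable models equals $k$ times the sum over soft-atom truth assignments, and that this sum splits as a product over the individual ground atoms. This is precisely where $k$-coherence is indispensable, since without the guarantee that every assignment extends to the \emph{same} number $k$ of stable models the leading factor would depend on $\sigma$, would fail to cancel in $P_\Pi$, and would destroy the clean product form. A secondary point I would note in passing is the boundary behaviour when some $m_i(I)$ or $n_i(I)$ is zero, where the optimal weight is $\pm\infty$ and the formula is read as a limit on the extended reals.
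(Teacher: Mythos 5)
Your proposal is correct and follows essentially the same route as the paper: both use $k$-coherence to factor the partition function into $k$ times a product over the soft atoms, deduce that each soft atom is an independent Bernoulli with success probability $e^{w_i}/(e^{w_i}+1)$, and then solve the separable concave per-coordinate stationarity condition $m_i(I)(1-p_i)=n_i(I)\,p_i$ to obtain $w_i=\ln\big(m_i(I)/n_i(I)\big)$. The only differences are cosmetic — you reach the stationarity equation through the general gradient formula $-n_i(I)+E[n_i]$ rather than by differentiating the product-form log-likelihood directly, and you are slightly more careful than the paper about multiple ground instances (the $N_i$ exponents) and the degenerate boundary cases.
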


\NBB{
[[Cohrent pr programs generalize ProbLog; it's useful for BC+. The folwing theroe is essentially a agnealization from ProblLog, wich hints that Problog algorithm canb e adapted to ]] }

\BOC
{In \cite{gutmann2011learning}, the authors had similar observation (see equation (2) in \cite{gutmann2011learning}). It tells us that weight learning for $k$-coherent $\lpmln$ program given a complete interpretation as the training data can be done much more efficiently than in the general setting.}
\EOC

%--------------------------------------------------------------------------------------------
\section{Implementation and Examples}\label{sec:examples}
%--------------------------------------------------------------------------------------------
We implemented Algorithm~\ref{alg:learning} and its extensions described above using {\sc clingo}, {\sc lpmln2asp}, and a near-uniform answer set sampler {\sc xorro
}. 
The implementation {\sc lpmln-learn} is available at \url{https://github.com/ywng485/lpmln-learning} together with a manual and some examples.
%--------------------------------------------------------------------------------------------
%\subsection{Implementation}
%--------------------------------------------------------------------------------------------
%In the next section, we use the implementation on various learning examples.
%--------------------------------------------------------------------------------------------
%\subsection{Examples}
%--------------------------------------------------------------------------------------------

\NBB{ current imlementation is prototype; sampler is not good }
%[[ Learning hypothesis ]]

%[[ Learning graphs from reachability? ]]

%[[ Learning effects of actions ]]

%$\lpmln$ allows modeling of probabilistic extensions of problems that can be solved with ASP, where probabilities of solutions are derived from weights of rules. 
In this section, we show how the implementation allows for learning weights in $\lpmln$ from the data enabling learning parameters in knowledge-rich domains. 
%The complete formalization in the language of {\sc lpmln-learn} is in {\cred Appendix~\ref{sec:encodings}}.

For all the experiments in this section, $\delta$ is set to be $0.001$. $\lambda$ is fixed to $0.1$ and $50$ samples are generated for each call of MC-ASP. The parameters for {\sc xorro} are manually tuned to achieve the best performance for each specific example.

%$\lpmln$ weight learning allows proper weights of the rules to be automatically found from the data 
%so the user does not need to put ad-hoc values of the weights, thus 
%enabling a partially data-driven modeling process.
%{\cmag When the training data describes an incomplete possible world, weight learning guided by stable model semantics infers the missing part of the data intuitively. [[ value ]]}
%The learned weights enable various inference facilitated by the knowledge rich semantics.

%In this section, we illustrate how $\lpmln$ learning can be used on three problem domains that utilize both probabilistic reasoning and stable model semantics.
%--- social network, probabilistic graphs and probabilistic action domains.

%--------------------------------------------------------------------------------------------
\subsection{Learning Certainty Degrees of Hypotheses}\label{ssec:learn-hypothesis}
%--------------------------------------------------------------------------------------------

The $\lpmln$ weight learning algorithm can be used to learn the certainty degree of a hypothesis from the data. For example, consider a person $A$ carrying a certain virus contacting a group of people. The virus spreads among them as people contact each other. 
We use the following ASP facts to specify that $A$ carries the virus and how people contacted each other:
\begin{lstlisting}
  carries_virus("A").
  contact("A", "B").  contact("B", "C").  ...
\end{lstlisting}
Consider two hypotheses that a person carrying the virus may cause him to have a certain disease, and the virus may spread by contact. 
%We want to learn the certainty degree of the hypotheses from the observed data. 
%Suppose we have the following hypothesis regarding the given training data, 
The hypotheses can be represented in the input language of {\sc lpmln-learn} by the following rules, where {\tt \@w(1)} and {\tt \@w(2)} are parameters to be learned:
\begin{lstlisting}
@w(1) has_disease(X) :- carries_virus(X).
@w(2) carries_virus(Y) :- contact(X, Y), 
                          carries_virus(X).
\end{lstlisting}
The parameterized $\lpmln$ program consists of these two rules and the facts about  \lstinline{contact} relation. The training data specifies whether each person carries the virus and has the disease, for example:
%We put the rules together with facts about {\tt contact}, and learn the weights w.r.t. the training data that specifies if each person carries the virus and if each person has the disease, for example:
\begin{lstlisting}
:- not carries_virus("E").   :- carries_virus("H").
...
:- not has_disease("A").      :- has_disease("H").
\end{lstlisting}

The learned weights tell us how certain the data support the hypotheses. 
Note that the program models the transitive closure of the \lstinline{carries_virus} relation, which is not properly done if the program is viewed as an MLN.\footnote{That is, identifying the rule $H\ar B$ with a formula in first-order logic $B\rar H$.} Learning under the MLN semantics results in weights that associate unreasonably high probabilities to people carrying virus even if they were not contacted by people with virus.

For example, consider the following graph 
\begin{center}
\includegraphics[width=0.5\linewidth]{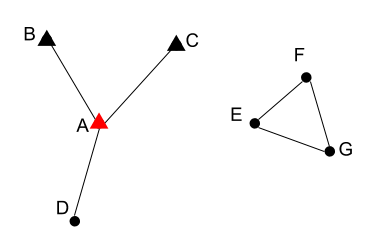}
\end{center}
where {\tt A} is the person who initially carries the virus,  triangle-shaped nodes represent people who carry virus in the evidence, and the edges denote the {\tt contact} relation. The cluster consisting of {\tt E}, {\tt F}, and {\tt G} has no contact with the cluster consisting of {\tt A}, {\tt B}, {\tt C}, and {\tt D}.
The following table shows the probability of each person carrying the virus, which is derived from the weights learned in accordance with Markov Logic and $\lpmln$, respectively. We use {\sc alchemy} for the weight learning in Markov Logic.

\medskip
{\footnotesize
\centering
\begin{tabular}{| c | c c c|}
\hline
 {\bf Person} & {\bf {\sc MLN}} & {\bf $\lpmln$} & carries\_virus \\  
       &   &   & (ground truth) \\ \hline
 $B$ & 0.823968 & 0.6226904833 & Y \\
 $C$ & 0.813969 & 0.6226904833 & Y \\
 $D$ & 0.818968 & 0.6226904833 & N \\
 $E$ & 0.688981 & 0 & N \\
$F$ & 0.680982 & 0 & N \\
$G$ & 0.680982 & 0 & N \\
 \hline
\end{tabular}
}

\medskip
As can be seen from the table, under MLN, each of {\tt E}, {\tt F}, {\tt G} has a high probability of carrying the virus, which is unintuitive. 
% ====

% For this program, we perform weight learning under both $\lpmln$ and MLN, and compare the marginal probabilities of people carrying virus under $\lpmln$ and MLN, which are derived from the learned weights. For example we consider the following graph
% \begin{center}
% \includegraphics[width=0.5\linewidth]{Virus-ijcai-rebuttal.png}
% \end{center}
% where {\tt A} is the person who initially carries the virus,  triangle shaped nodes represents people who carries virus in the evidence, and edge denote the {\tt contact} relation. The cluster consisting of {\tt E}, {\tt F}, and {\tt G} has no contact with the cluster consisting of {\tt A}, {\tt B}, {\tt C}, and {\tt D}.
% Below is the comparison of probabilities of each person carrying the virus, which are derived from the weights learned in accordance with Markov Logic and $\lpmln$, respectively. We use {\sc alchemy} for the Markov Logic learning.

% \NB{both encodings are in appendix} 

%--------------------------------------------------------------------------------------------
\subsection{Learning Probabilistic Graphs from Reachability} \label{ssec:learn-reachability}
%--------------------------------------------------------------------------------------------
Consider an (unstable) communication network such as the one in Figure~\ref{fig:com-net}, where each node represents a signal station that sends and receives signals. A station may fail, making it impossible for signals to go through the station. The following $\lpmln$ rules define the connectivity between two stations {\tt X} and {\tt Y} in session {\tt T}.

% Two station X and Y are connected for communication 
% session T if there is an edge between them and 
% neither X nor Y fails at session T
\begin{lstlisting}
connected(X,Y,T) :- edge(X,Y), not fail(X,T), 
                               not fail(Y,T).
connected(X,Y,T) :- connected(X,Z,T), connected(Z,Y,T).
\end{lstlisting}
% The transitivity of connectivity
A specific network can be defined by specifying edge relations, such as \lstinline{edge(1,2)}.
% For example, the network in Figure \ref{fig:com-net} is represented as follows:
% \begin{lstlisting}
% edge(1, 2).   edge(1, 4).    edge(2, 3).
% edge(4, 5).   edge(4, 6).    edge(3, 7).
% edge(6, 7).   edge(5, 7).    edge(3, 8).
% edge(6, 10).  edge(5, 9).
% \end{lstlisting}
Suppose we have data showing the connectivity between stations in several sessions. Based on the data, we could make decisions such as which path is most reliable to send a signal between the two stations. Under the $\lpmln$ framework, this can be done by learning the weights representing the failure rate of each station. For the network in Figure \ref{fig:com-net}, we write the following rules whose weights ${\tt w}(i)$ are to be learned:
\begin{lstlisting}
@w(1) fail(1, T).    ...     @w(10) fail(10, T).
\end{lstlisting}
%#domain session(T).
%[[ \#domain to be explained ]]

\begin{figure}[ht]
% \begin{wrapfigure}{l}{0.25\textwidth}
\begin{center}
\includegraphics[width=4cm]{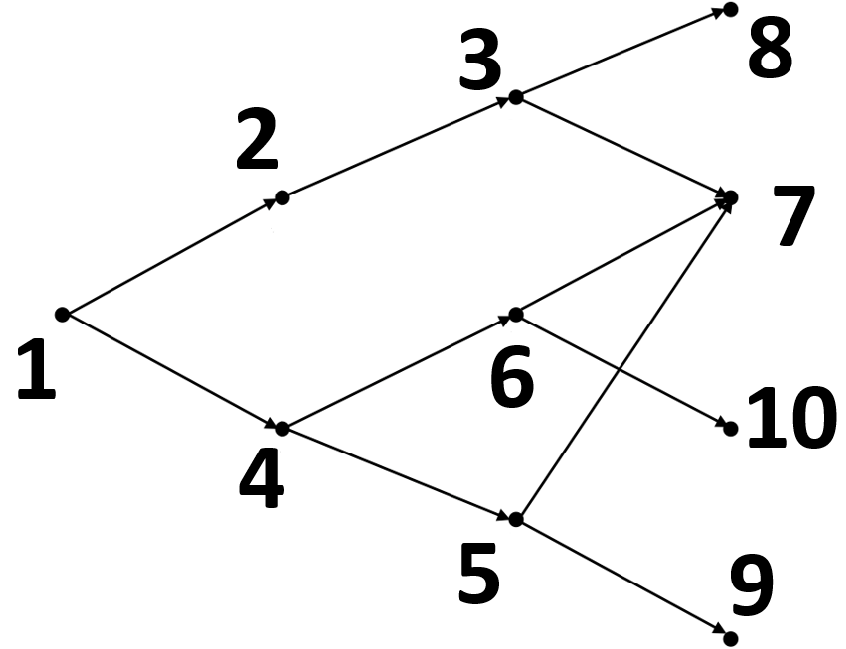} 
\end{center}
%\vspace{-0.5cm}
\caption{Example Communication Network}
%\vspace{-0.2cm}
\label{fig:com-net}
%\{wrapfigure}
\end{figure}
Here ${\tt T}$ is the auxiliary argument to allow learning from multiple training examples, as described in Section~\ref{sec:multiple-evidence}.
%\ref{sec:multiple-evidence}. 
The training example contains constraints either \lstinline{:- not connected(X,Y)} for known connected stations \lstinline{X} and \lstinline{Y} or \lstinline{:- connected(X,Y)} for known disconnected stations \lstinline{X} and \lstinline{Y}.
%Suppose we have the instances described in Figure \ref{fig:com-net-train} as the training examples ($c(X,Y)$ denotes $connected(X,Y)$). Note that 
Since the training data is incomplete in specifying the connectivity between the stations, 
%since we only know connectivities between certain pairs of nodes only, our evidence is partial and 
we use the extension of Algorithm~\ref{alg:learning} described in Section~\ref{ssec:partial}. 
The failure rates of the stations can be obtained from the learned weights as 
%which  That is, the failure rate of station $i$ is 
$\frac{e^{{\tt w}(i)}}{{e^0 + e^{{\tt w}(i)}}}$.

\NBB{{\cred \LARGE [[The probabilities are higher than the theoretical values because the samples returned from Flavio's component are biased towards less failed nodes]]}}

%{\cred [[The probabilities differ from results from Problog.]]}

%$ ({\cred Not meaningful since the weights are not shown any more})
%
%\begin{figure}[ht]
%\begin{center}
%\includegraphics[width=0.8\linewidth]{convergence-max-diff.png} 
%\end{center}
%\vspace{-0.5cm}
%\caption{Convergence behavior of Node Failure Rate Learning}
%\label{fig:com-net-convergence-max-diff}
%\end{figure}

\begin{figure}[ht]
\begin{center}
\includegraphics[width=1.0\linewidth, height=5cm]{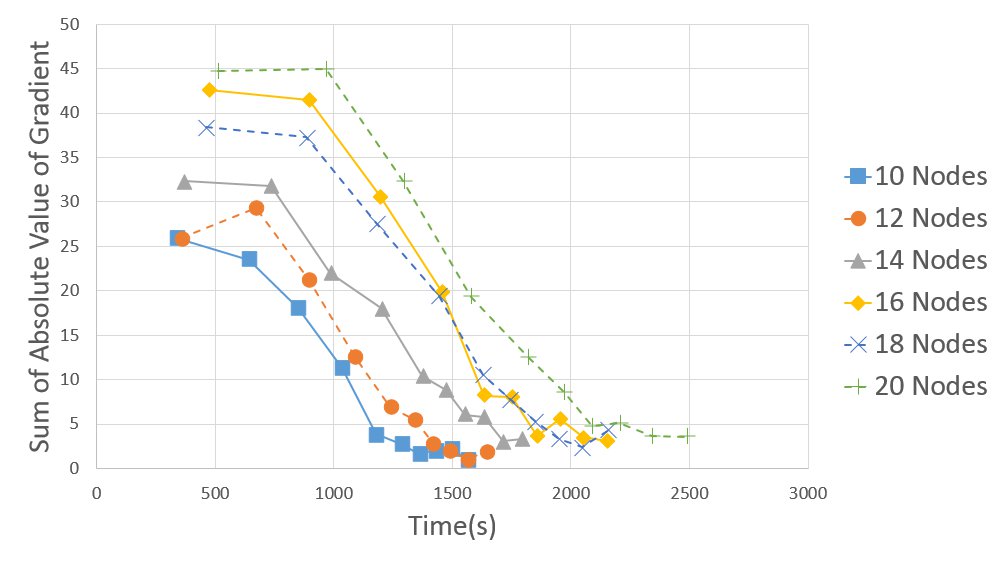} 
\end{center}
%\vspace{-0.2cm}
\caption{Convergence Behavior of Failure Rate Learning}
%\vspace{-0.2cm}
\label{fig:com-net-convergence-sum-gradient}
\end{figure}

%Figure \ref{fig:com-net-result} shows the failure rate of all nodes computed in this way. 
%Figure \ref{fig:com-net-convergence-max-diff} shows the convergence behaviour over time in terms of maximum difference between weights of two consecutive iterations. 
We execute learning on graphs with $10, 12, \dots, 18, 20$ nodes, where the graph with $10$ nodes is shown in Figure~\ref{fig:com-net}. We add $1, 2, \dots, 5$ layers of 2 nodes between Node $1$ and Node $2,4$ to obtain the other graphs, where there is an edge between every node in one layer and every node in the previous and next layer. Figure \ref{fig:com-net-convergence-sum-gradient} shows the convergence behavior over time in terms of the sum of the absolute values of gradients of all weights. Running time is mostly spent by the uniform sampler for answer sets. The experiments are performed on a machine with 4 Intel(R) Core(TM) i5-2400 CPU with OS Ubuntu 14.04.5 LTS and 8 GB memory.

%\NB{@Yi: explain the graph and explain the bottleneck of the computation due to sampler ; explain why sampler is taking lots of time} 

Figure~\ref{fig:com-net-convergence-sum-gradient} shows that convergence takes longer as the number of nodes increases, which is not surprising. 
%For one thing, a greater number of nodes corresponds to a greater number of ground instances of rules, which leads to larger possible range of gradient. For another, greater number of nodes also means larger domain size, which requires more stable model solving time.
Note that the current implementation is not very efficient. Even for graphs with $10-20$ nodes, it takes $1500-2000$ seconds to obtain a reasonable convergence. The computation bottleneck lies in the uniform sampler used in Step 2(c) of Algorithm~\ref{alg:mc-asp} whereas creating $\Pi_{neg}$ and turning $\lpmln$ programs into ASP programs are done instantly.
The uniform sampler that we use, {\sc xorro}, follows Algorithm 2 in \cite{gomes07near}. It uses a fixed number of random XOR constraints to prune out a subset of stable models, and randomly select one remaining stable model to return. The process of solving for all stable models after applying XOR constraints can be very time-consuming.
%Once we learn the weights, we may perform various reasoning tasks. For example, we can ask the probability that two stations are connected.

% by executing
% \begin{lstlisting}
% lpmln2asp -i prog-learned.lpmln -q connected
% \end{lstlisting}
% which gives
% \begin{lstlisting}
% connected(5, 9) 0.180501334533
% connected(4, 7) 0.228529831666
% ...
% \end{lstlisting}

%{\cblu The running time is spent on two tasks: 1) modifying the input program (step (b) in algorithm 2) and translating the input program into ASP program, 2) sampling answer sets using {\sc xorro}. Task 1) almost finishes instantly and is negligible compared to task 2).}

In this example, it is essential that the samples are generated by an ASP solver because information about node failing needs to be correctly derived from the connectivity, which involves reasoning about the transitive closure.

As Theorem \ref{cor:lpmln-learn2problog-learn} indicates, this weight learning task can alternatively be done through ProbLog weight learning. We use
{\sc problog},\footnote{\url{https://dtai.cs.kuleuven.be/problog/}} an implementation of ProbLog.
The performance of {\sc problog} on weight learning depends on the tightness of the input program. We observed that for many tight programs, {\sc problog} appears to have better scalability than our prototype {\sc lpmln-learn}. However, {\sc problog} system does not show a consistent performance on non-tight programs, such as the encoding of the network example above, possibly due to the fact that it has to convert the input program into weighted Boolean formulas, which is expensive for non-tight programs.\footnote{The difference appears to be analogous to the different approaches to handling non-tight programs by answer set solvers, e.g., the translation-based approach such as {\sc assat} and {\sc cmodels} and the native approach such as {\sc clingo}.}  We can identify many graph instances of the network failure example where our prototype system outperforms {\sc problog}, as the density of the graph gets higher. For example, consider the graph in Figure~\ref{fig:com-net}. With the nodes fixed, as we add more edges to make the graph denser, we eventually hit a point when {\sc problog} does not return a result within a reasonable time limit.
 %(even with number of iteration set to be 1).
Below is the statistics of several instances.
\begin{table}[h!]
{\footnotesize
\begin{tabular}{| c | c c c |}
\hline
 {\bf \# Edges} & {\bf {\sc lpmln-learn}} & {\bf {\sc ProbLog}} & $\substack{\text{{\sc problog}}\\ \text{(with modified program)}}$\\
 \hline
 $10 $ & 351.237s & 2.565s & 0.846s\\
 $14$ & 476.656s & 2.854s & 0.833s \\
 $15$ & 740.656s & $>$ 20 min & 0.957s\\
 $20$ & 484.348s & $>$ 20 min & 76.143s\\
$40$ & 304.407s & $>$ 20 min & 26.642s\\
 \hline
\end{tabular}
}
\end{table}

The input files to {\sc problog} consist of two parts: edge lists and the part that defines the node failure rates and connectivity. The latter is different for the second column and the third column in the table. For the second column it is the same as the input to {\sc lpmln-learn}:
\begin{lstlisting}
t(_)::fail(1).     ...     t(_)::fail(10).

connected(X, Y):- edge(X, Y), not fail(X), not fail(Y).
connected(X, Y):- connected(X, Z), connected(Z, Y).
\end{lstlisting}

For the third column, we rewrite the rules to make the Boolean formula conversion easier for {\sc problog}. The input program is:\footnote{This was suggested by 
Angelika Kimmig (personal communication)}
\begin{lstlisting}
t(_)::fail(1).     ...     t(_)::fail(10).

aux(X, Y) :- edge(X, Y), not fail(X), not fail(Y).
connected(X, Y) :- aux(X, Y).
connected(X, Y) :- connected(X, Z), aux(Z, Y).
\end{lstlisting}

\BOCC
The edge list differs from instance to instance, as an example, the edge list for the instance with 40 edges is
\begin{lstlisting}
edge(1, 2).
edge(1, 4).
edge(2, 3).
edge(4, 5).
edge(4, 6).
edge(3, 7).
edge(6, 7).
edge(5, 7).
edge(3, 8).
edge(6, 10).
edge(5, 9).
edge(1, 3).
edge(1, 5).
edge(1, 6).
edge(1, 7).
edge(1, 8).
edge(1, 9).
edge(1, 10).
edge(2, 4).
edge(2, 5).
edge(2, 6).
edge(2, 7).
edge(2, 8).
edge(2, 9).
edge(2, 10).
edge(3, 4).
edge(3, 5).
edge(3, 6).
edge(3, 8).
edge(3, 9).
edge(3, 10).
edge(4, 1).
edge(4, 2).
edge(4, 3).
edge(4, 4).
edge(4, 7).
edge(4, 8).
edge(4, 9).
edge(4, 10).
\end{lstlisting}
\EOCC

% The command line is
% \begin{lstlisting}
% problog lfi prog.pl evid.pl -O prog-learned.pl -n 1
% \end{lstlisting}
%}

Although all graph instances have some cycles in the graph, the difference between the instance with 14 edges and 15 edges is the addition of one cycle.
%that one particular edge {\tt edge(4, 1)} is added to the one with 15 edges. Together with {\tt edge(1, 4)}, it forms a cycle in the graph.  
Even with the slight change in the graph, the performance of {\sc problog} becomes significantly slower.

%Although all graph instances have some cycles in the graph, the difference between the instance with 14 edges and 15 edges is that one particular edge {\tt edge(4, 1)} is added to the one with 15 edges. Together with {\tt edge(1, 4)}, it forms a cycle in the graph.  Even with this slight change in the graph, the performance of {\sc problog} is significantly slower.

%--------------------------------------------------------------------------------------------
\subsection{Learning Parameters for Abductive Reasoning about Actions}\label{ssec:learn-action}
%--------------------------------------------------------------------------------------------

One of the successful applications of answer set programming is modeling dynamic domains. $\lpmln$ can be used for extending the modeling to allow uncertainty. 
A high-level action language $p{\cal BC}+$ is defined as a shorthand notation for $\lpmln$ \cite{lee18aprobabilistic}. The language allows for probabilistic diagnoses in action domains: given the action description and the histories {where an abnormal behavior occurs}, how to find the reason for the failure? There, the probabilities are specified by the user. This can be enhanced by learning the probability of the failure from the example histories using {\sc lpmln-learn}.\footnote{{ProbLog could not be used in place of $\lpmln$ here because it has the requirement that every total choice leads to exactly one well founded model, and consequently does not support choice rules, which has been used in the formalization of the robot example in this section.}}
In this section, we show how $\lpmln$ weight learning can be used for learning parameters for abductive reasoning in action domains. Due to the self-containment of the paper, instead of showing $p{\cal BC}+$ descriptions, we show its counterpart in $\lpmln$. 

%Various action domains involving uncertainties can be modeled in $\lpmln$. $\lpmln$ weight learning enables learning parameters of these actions domains. For example, 
Consider the robot domain described in \cite{iwan02history}: a robot located in a building with 2 rooms {\tt r1} and {\tt r2} and a book that can be picked up. The robot can move to rooms, pick up the book, and put down the book. Sometimes actions may fail: the robot may fail to enter the room, may fail to pick up the book, and may drop the book when it has the book.
%\begin{itemize}
%    \item The robot may fail to enter the room;
%    \item The robot may fail to pick up the book;
%    \item The robot may drop the book when it has the book
%\end{itemize}
The domain can be modeled using answer set programs, e.g., \cite{lif99b}. We illustrate how such a description can be enhanced to allow abnormalities, and how the $\lpmln$ weight learning method can learn the probabilities of the abnormalities given a set of actions and their effects. 

We introduce the predicate 
$Ab(i)$ to represent that some abnormality occurred at step $i$, and the predicate $Ab(AbnormalityName, i)$
to represent that a specific abnormality $AbnormalityName$ occurred at step $i$. The occurrences of specific abnormalities are controlled by probabilistic fact atoms and their preconditions. For example, 
\begin{align}
\nonumber w_1\ &:\ \i{Pf}_1(i)\\
\nonumber \alpha\ &:\  Ab(\i{EnterFailed},i) \leftarrow \i{Pf}_1(i), Ab(i).
\end{align}
defines that the abnormality $\i{EnterFailed}$ occurs with probability { $\frac{e^{w_1}}{e^{w_1}+1}$} {(controlled by the weighted atomic fact $\i{Pf}_1(i)$, which is introduced to represent the probability of the occurrence of $\i{EnterFailed}$)} at time step $i$ if there is some abnormality at time step $i$. { Similarly we have
\begin{align}
\nonumber w_2\ &:\ \i{Pf}_2(i)\\
\nonumber \alpha\ &:\  Ab(\i{DropBook},i) \leftarrow \i{Pf}_2(i), Ab(i).\\
\nonumber w_3\ &:\ \i{Pf}_3(i)\\
\nonumber \alpha\ &:\  Ab(\i{PickupFailed},i) \leftarrow \i{Pf}_3(i), Ab(i).
\end{align}
}
%\vspace{-4mm}

When we describe the effect of actions, we need to specify ``no abnormality'' as part of the precondition of the effect: The location of the robot changes to room $r$ if it goes to room $r$ unless abnormality $\i{EnterFailed}$ occurs:
{\small
\[
 \alpha :\ \i{LocRobot}(r, i+1) \leftarrow Goto(r, i), not\ Ab(\i{EnterFailed}, i).
\]
}
The location of the book is the same as the location of the robot if the robot has the book:
\[
\alpha\ :\ \i{LocBook}(r, i) \leftarrow \i{LocRobot}(r, i), \i{HasBook}(T, i).
\]
The robot has the book if it is at the same location as the book and it picks up the book, unless abnormality $\i{PickupFailed}$ occurs:
{\small 
\begin{align}
\nonumber \alpha &:\ \i{HasBook}(\true, i+1) \leftarrow \i{PickupBook}(\true, i),\\
\nonumber & ~~~~~\i{LocRobot}(r, i), \i{LocBook}(r, i),not\ Ab(\i{PickupFailed}, i).
\end{align}
}
The robot loses the book if it puts down the book:
\[
\alpha\ :\ \i{HasBook}(\false, i+1) \leftarrow PutdownBook(\true, i).
\]
The robot loses the book if abnormality $DropBook$ occurs:
\[
\alpha\ :\ \i{HasBook}(\false, i+1) \leftarrow Ab(DropBook, i).
\]
The commonsense law of inertia for each fluent is specified by the following hard rules:
\begin{align}
\nonumber \alpha\ &:\ \{\i{LocRobot}(r, i+1)\} \leftarrow \i{LocRobot}(r, i), astep(i). \\
\nonumber \alpha\ &:\ \{\i{LocBook}(r, i+1)\} \leftarrow \i{LocBook}(r, i), astep(i). \\
\nonumber \alpha\ &:\ \{\i{HasBook}(b, i+1)\} \leftarrow \i{HasBook}(b, i), astep(i).  
\end{align}
For the lack of space, we skip the rules specifying the uniqueness and existence of fluents and actions, rules specifying that no two actions can occur at the same timestep, and rules specifying that the initial state and actions are exogenous. 
%The complete encoding can be found in Appendix~\ref{sec:encodings}.

We add the hard rule
\[
\alpha\ :\ Ab(i) \ar astep(i)
\]
to enable abnormalities for each timestep $i$.

To use multiple action histories as the training data, we use the method from Section \ref{sec:multiple-evidence} and introduce an extra argument to every predicate, that represents the action history ID.

\NBB{multiple-evidence + incomplete stable model?}

%We use the technique described in Section \ref{sec:multiple-evidence} to allow learning from multiple training examples ({\tt ID} is the variable that ranges over example indices). 
%Due to page limit, we skipped the rules that define the commonsense law of inertia, rules that define the uniqueness and existence of value of fluents and actions, rules that say no two actions can occur at the same time step and rules that says initial state and actions are exogenous.

We then provide a list of 12 transitions as the training data. For example, the first transition ({\tt ID} =1) tells us that the robot performed {\tt goto} action to room {\tt r2}, which failed.
%(recall the last argument represents the index of the training data). 
\begin{lstlisting}
:- not loc_robot("r1",0,1). :- not loc_book("r2",0,1).
:- not hasBook("f",0,1).    :- not goto("r2",0,1).
:- not loc_robot("r1",1,1).
\end{lstlisting}

% :- not loc_robot("r1", 0, 2).
% :- not loc_book("r2", 0, 2).
% :- not hasBook("f", 0, 2).
% :- not goto("r2", "t", 0, 2).
% :- not loc_robot("r2", 1, 2).
 
\NBB{Need to justify why [0,1] only} 

Among the training data, {\tt enter\_failed} occurred 1 time out of 4 attempts, {\tt pickup\_failed} occurred 2 times out of 4 attempts, and {\tt drop\_book} occurred 1 time out of 4 attempts. {The transitions are partially observed data in the sense that they specify only some of the fluents and actions; other facts about fluents, actions and abnormalities have to be inferred.}

Note that this program is $(|A|+1)$-coherent, where $|A|$ is the number of actions (i.e., $Goto$, $PickupBook$ and $DropBook$) and $1$ is for no actions. We execute gradient ascent learning with 50 learning iterations and 50 sampling iterations for each learning iteration. 
The weights learned are
\begin{lstlisting}
Rule 1:  -1.084    Rule 2:  -1.064    Rule 3:  -0.068
\end{lstlisting}
%By Theorem \ref{thm:tc-independence}, 
The probability of each abnormality can be computed from the weights as follows:
{\small
\[
P({\tt enter\_failed}) = \frac{exp(-1.084)}{exp(-1.084) + 1} \approx 0.253
\]
\[
P({\tt drop\_book}) = \frac{exp(-1.064)}{exp(-1.064) + 1} \approx 0.257
\]
\[
P({\tt pickup\_failed}) = \frac{exp(-0.068)}{exp(-0.068) + 1} \approx 0.483
\]
}
%\NB{
%{\LARGE
%{\cred Weights learned are far from theoretical value. The cause seems to be that presence of choice rules makes the uniform sampling biased.}
%}
%}

%Various inference tasks can be automated with the learned weights, such as diagnostic reasoning \cite{Iwan2001} . 

%We execute gradient ascent learning with 50 learning iterations and 50 sampling iterations for each learning iteration {\cred (Is this mearningful to say when the exact weights learned is not shown?)}.  
The learned weights of {\tt pf} atoms indicate the probability of the action failure when some abnormal situation {\tt ab(I, ID)} happens. This allows us to perform probabilistic diagnostic reasoning in which parameters are learned from the histories of actions. For example, suppose the robot and the book were initially at {\tt r1}. The robot executed the following actions to deliver the book from {\tt r1} to {\tt r2}: pick up the book; go to {\tt r2}; put down the book.
However, after the execution, it observes that the book is not at {\tt r2}. What was the problem?

Executing system {\sc lpmln2asp} on this encoding tells us that the most probable reason is that the robot fails at picking up the book. However, if we add that the robot itself is also not at {\tt r2}, then {\sc lpmln2asp} computes the most probable stable model to be the one that has the robot failed at entering~{\tt r2}.

%{\cblu [[More queries]]}

% We answer this query by adding the action and observation history to the $\lpmln$ program.

% \begin{lstlisting}
% % Action and Observation History
% :- not loc_robot("r1", 0).
% :- not loc_book("r1", 0).
% :- not hasBook("f", 0).
% :- not pickup_book("t", 0).
% :- not goto("r2", "t", 1).
% :- not putdown_book("t", 2).
% :- loc_book("r2", 3).

% % Enable abnormality
% {ab(I)} :- step(I).
% \end{lstlisting}
% Executing
% \begin{lstlisting}
% lpmln2asp -i robot.lpmln
% \end{lstlisting}
% yields
% \begin{lstlisting}
% pickup_book("t",0)
% ab(pickup_failed,0)  
% goto("r2","t",1)
% putdown_book("t",2)
% \end{lstlisting}
% which suggests that the robot fails at picking up the book. 

% By executing
% \begin{lstlisting}
% lpmln2asp -i robot.lpmln -q ab
% \end{lstlisting}
% we can see the probability of each abnomality:
% \begin{lstlisting}
% ab(drop_book, 1) 0.153037991491
% ab(drop_book, 2) 0.0272062034122
% ab(pickup_failed, 0) 0.602136982113
% ab(enter_failed, 1) 0.395859339784
% \end{lstlisting}

% Suppose that the robot also observes that itself is also not at {\tt r2}. We expand the action history with
% \begin{lstlisting}
% :- loc_robot("r2", 3).
% \end{lstlisting} 
% Now the most probable stable model becomes
% \begin{lstlisting}
% pickup_book("t",0)
% goto("r2","t",1)
% ab(enter_failed,1)  
% putdown_book("t",2) 
% \end{lstlisting} 
% suggesting that robot failed at entering {\tt r2}.

% {\cred [[Need to update with the correct weights]]}

\section{Conclusion}

%Besides MLN implementations, ProbLog and PrASP are closely related. 
The work presented relates answer set programming to {\em learning from data}, which has been under-explored, with some exceptions like \cite{law14inductive,nickles16atool}. Via $\lpmln$,  learning methods developed for Markov Logic can be adapted to find the weights of rules under the stable model semantics, utilizing answer set solvers for performing MCMC sampling. Rooted in the stable model semantics, $\lpmln$ learning is useful for learning parameters for programs modeling knowledge-rich domains. Unlike MC-SAT for Markov Logic, MC-ASP allows us to infer the missing part of the data guided by the stable model semantics. Overall, the work paves a way for a knowledge representation formalism to embrace machine learning methods.

%[[{\cblu discrimnative learning can be easily added } ]]

The current $\lpmln$ learning implementation is a prototype with the  computational bottleneck in the uniform sampler, which is used as a blackbox. Unlike the work in machine learning, sampling has not been much considered in the context of answer set programming, and even the existing sampler we adopted was not designed for iterative calls as required by the MCMC sampling method. This is where we believe a significant performance increase can be gained. Using the idea such as constrained sampling \cite{meel16constrained} may enhance the solution quality and the scalability of the implementation, which is left for future work.
%The learned weights enable various in- ference facilitated by the knowledge rich semantics.

%MC-ASP is an adaptation of the MC-SAT method from Markov Logic but the weight learning in $\lpmln$ is guided by the stable model semantics, allowing us to infer the missing parts of the data using knowledge rich constructs in answer set programming, such as reasoning about transitive closure and abductive reasoning. 

PrASP \cite{nickles14probabilistic} is related to $\lpmln$ in the sense that it is also a probabilistic extension of ASP. Weight learning in PrASP is very similar to weight learning in $\lpmln$: a variation of gradient ascent is used to update the weights so that the weights converge to a value that maximizes the probability of the training data. In PrASP setting, it is a problem that the gradient of the probability of the training data cannot be expressed in a closed form, and is thus hard to compute. The way how PrASP solves this problem is to approximate the gradient by taking the difference between the probability of training data with current weight and with current weight slightly incremented. The probability of training data, given fixed weights, is computed with inference algorithms, which typically involve sampling methods.

In this paper, we only considered $\lpmln$ weight learning with the basic gradient ascent method. There are several advanced weight learning techniques and sophisticated problem settings used for MLN weight learning that can possibly be adapted to $\lpmln$. For example, \cite{lowd07efficient} discussed some enhancement to the basic gradient ascent, \cite{khot11learning} proposed a method for learning the structure and the weights simultaneously,  and \cite{Mittal16FineGW} discussed how to automatically identify  clusters of ground instances of a rule and learn different weight for each of these clusters.

%[[ we have an implementation of gneral one ; will do speicaliazed ones ]]

\bigskip\noindent
{\bf Acknowledgments:} \ \ 
We are grateful to Zhun Yang and the anonymous referees for their useful comments. This work was partially supported by the National Science Foundation under Grants IIS-1526301 and IIS-1815337.

%\fontsize{9.1pt}{10.1pt} \selectfont
\bibliographystyle{aaai}
%\bibliography{bib,bib2}

\onecolumn

\lstset{
   basicstyle=\ttfamily,
   basewidth=0.5em,
   numbers=none,
   numberstyle=\tiny,  
   stringstyle=\small\ttfamily,
   showspaces=false,
   showstringspaces=false
}

\appendix
\section{Proofs}

%--------------------------------------------------------------------------------------------
\subsection{Proof of Theorem \ref{thm:mc-sat-lpmln-pnt-correctness}}
%--------------------------------------------------------------------------------------------

\begin{lemma}\label{lem:softonly-general}
For any $\lpmln$ program $\Pi$ and a probabilistic stable model $I$ of $\Pi$, we have
\[
P_{\Pi}(I) = \frac{exp(\underset{w:R\in \Pi^{\rm soft}_I}{\sum}w)}{Z}
\]
where
\[
Z = \underset{\text{$J$ is a stable model of $\Pi$ }}{\sum}exp(\underset{w:R\in \Pi^{\rm soft}_J}{\sum}w)
\]
\end{lemma}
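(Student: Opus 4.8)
The plan is to prove Lemma~\ref{lem:softonly-general} by directly manipulating the definition of $P_\Pi(I)$ given in the review section and showing that the penalty-based weight $W_\Pi(I)$ can be rewritten, up to a common factor that cancels in the ratio, as a reward-based expression summing only over the \emph{satisfied} soft rules. First I would recall that for any probabilistic stable model $I$ (equivalently $I\in\sm[\Pi]$, using the non-emptiness remark that every probabilistic stable model satisfies all hard rules), the definition gives
\[
W_\Pi(I) = exp\Big(-\!\!\!\underset{w_i:R_i\in\Pi^{\rm soft}}{\sum}\!\! w_i n_i(I)\Big),
\]
where $n_i(I)$ counts the ground instances of $R_i$ false in $I$.

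The key step is a bookkeeping identity on ground instances. Let me split the soft rules of $gr(\Pi)$ into those satisfied by $I$ and those falsified by $I$. Writing $m_i(I)$ for the number of ground instances of $R_i$ satisfied in $I$ and letting $g_i$ be the total number of ground instances of $R_i$, we have $m_i(I)+n_i(I)=g_i$, hence
\[
-\underset{w_i:R_i\in\Pi^{\rm soft}}{\sum} w_i n_i(I)
= \underset{w_i:R_i\in\Pi^{\rm soft}}{\sum} w_i m_i(I)
\;-\; \underset{w_i:R_i\in\Pi^{\rm soft}}{\sum} w_i g_i.
\]
The crucial observation is that $\underset{w:R\in\Pi^{\rm soft}_I}{\sum} w = \underset{w_i:R_i\in\Pi^{\rm soft}}{\sum} w_i m_i(I)$, since $\Pi^{\rm soft}_I$ collects exactly the (ground) soft rules $I$ satisfies, each contributing its weight once per satisfied ground instance. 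Therefore
\[
W_\Pi(I) = exp\Big(\underset{w:R\in\Pi^{\rm soft}_I}{\sum} w\Big)\cdot exp\Big(-\underset{w_i:R_i\in\Pi^{\rm soft}}{\sum} w_i g_i\Big).
\]
The second factor, call it $C$, depends only on $\Pi$ and not on $I$, so it is a constant across all stable models.

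The final step is to divide by the normalization. Since $C$ is independent of the interpretation, it factors out of both the numerator and the denominator $\sum_{J\in\sm[\Pi]} W_\Pi(J)$, and cancels, yielding
\[
P_\Pi(I)=\frac{W_\Pi(I)}{\underset{J\in\sm[\Pi]}{\sum} W_\Pi(J)}
= \frac{exp(\underset{w:R\in\Pi^{\rm soft}_I}{\sum} w)}{\underset{J\in\sm[\Pi]}{\sum} exp(\underset{w:R\in\Pi^{\rm soft}_J}{\sum} w)},
\]
which is exactly the claimed formula with $Z$ the stated denominator (noting that the stable models of $\Pi$ ranged over in $Z$ are precisely the members of $\sm[\Pi]$ when $\sm[\Pi]$ is nonempty). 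I do not anticipate a serious obstacle here; the only point requiring care is the clean separation of the per-rule count $n_i$ into satisfied and falsified ground instances and confirming that $\Pi^{\rm soft}_I$ is counted \emph{with multiplicity} over ground instances, so that $\sum_{w:R\in\Pi^{\rm soft}_I} w$ genuinely equals $\sum_i w_i m_i(I)$ rather than a set-based sum. Once that correspondence is pinned down, the reward/penalty duality and the cancellation of the interpretation-independent constant $C$ make the rest routine.
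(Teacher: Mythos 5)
Your proof is correct, but it takes a genuinely different route from the paper's. You work entirely within the penalty-based definition stated in the review section, using the complement identity $n_i(I)=g_i-m_i(I)$ to rewrite $W_\Pi(I)=exp\big(\sum_{w:R\in\Pi^{\rm soft}_I}w\big)\cdot C$ with an interpretation-independent constant $C=exp\big(-\sum_i w_i g_i\big)$ that cancels in the normalization; this is short, elementary, and avoids any limit argument. The paper instead proves the lemma starting from the \emph{original} reward-based definition of $\lpmln$ (the $\lim_{\alpha\to\infty}$ formulation from Lee and Wang 2016, where the sums range over all satisfied rules including hard ones): it splits the partition function according to how many hard rules each stable model satisfies, divides through by $exp(k\alpha)$ where $k$ is the maximal such count, and lets $\alpha\to\infty$ to kill the contributions of models satisfying fewer than $k$ hard rules, along the way re-deriving that every probabilistic stable model satisfies exactly $k$ hard rules. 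What the paper's route buys is a derivation that does not presuppose the equivalence of the two formulations and explicitly justifies why the hard-rule weights drop out; what your route buys is brevity and directness, at the cost of leaning on the review section's penalty-based definition (and its stated equivalence to the original one when $\sm[\Pi]$ is non-empty) as the starting point. Your flagged care point --- that $\Pi^{\rm soft}_I$ must be read as counting satisfied \emph{ground instances} with multiplicity so that $\sum_{w:R\in\Pi^{\rm soft}_I}w=\sum_i w_i m_i(I)$ --- is the right thing to pin down and is consistent with how the paper uses the notation.
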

\begin{proof}
Let $k$ be the maximum number of hard rules in $\Pi$ that any interpretation can satisfy. For any interpretation $J$, we use $J\vDash_{SM} \Pi$ as an abbreviation of ``$J$ is a probabilistic stable model of $\Pi$''.

By definition we have
\begin{align}
\nonumber P_{\Pi}(I) &= \underset{\alpha\to\infty}{\lim}\frac{exp(\underset{w:R\in \Pi_I}{\sum}w)}{\underset{J\vDash_{SM}\Pi}{\sum}exp(\underset{w:R\in \Pi_J}{\sum}w)}
\end{align}
Splitting the denominator into two parts: those $J$'s that satify $k$ hard rules in $\Pi$ and those that satisfy less hard rules, and extracting the weights of $k$ hard rules, $k\alpha$, we have
\begin{align}
\nonumber P_{\Pi}(I) &= \underset{\alpha\to\infty}{\lim}\frac{exp(\underset{w:R\in \Pi_I}{\sum}w)}{exp(k\alpha)\underset{\substack{J\vDash_{SM}\Pi\\ |\Pi^{\rm hard}_J|=k}}{\sum}exp(\underset{w:R\in \Pi^{\rm soft}_J}{\sum}w) + \underset{\substack{J\vDash_{SM}\Pi\\ |\Pi^{\rm hard}_J|<k}}{\sum}exp(|\Pi^{\rm hard}_J|\cdot \alpha)exp(\underset{w:R\in \Pi^{\rm soft}_J}{\sum}w)}
\end{align}
Let $k^\prime$ denote the number of hard rules that $I$ satisfy. We have
\begin{align}
\nonumber P_{\Pi}(I) &= \underset{\alpha\to\infty}{\lim}\frac{exp(k^\prime\alpha)exp(\underset{w:R\in \Pi^{\rm soft}_I}{\sum}w)}{exp(k\alpha)\underset{\substack{J\vDash_{SM}\Pi\\ |\Pi^{\rm hard}_J|=k}}{\sum}exp(\underset{w:R\in \Pi^{\rm soft}_J}{\sum}w) + \underset{\substack{J\vDash_{SM}\Pi\\ |\Pi^{\rm hard}_J|<k}}{\sum}exp(|\Pi^{\rm hard}_J|\cdot\alpha)exp(\underset{w:R\in \Pi^{\rm soft}_J}{\sum}w)}
\end{align}
Dividing both the numerator and the denominator by $exp(k\alpha)$, we get
\begin{align}
\nonumber P_{\Pi}(I) &= \underset{\alpha\to\infty}{\lim}\frac{\frac{exp(k^\prime\alpha)}{exp(k\alpha)}exp(\underset{w:R\in \Pi^{\rm soft}_I}{\sum}w)}{\underset{\substack{J\vDash_{SM}\Pi\\ |\Pi^{\rm hard}_J|=k}}{\sum}exp(\underset{w:R\in \Pi^{\rm soft}_J}{\sum}w) + \frac{1}{exp(k\alpha)}\underset{\substack{J\vDash_{SM}\Pi\\ |\Pi^{\rm hard}_J|<k}}{\sum}exp(|\Pi^{\rm hard}_J|\cdot\alpha)exp(\underset{w:R\in \Pi^{\rm soft}_J}{\sum}w)}
\end{align}
We argue that $k^\prime=k$: Since $k$ is the maximum number of hard rules an interpretation can satisfy, $k^\prime \leq k$; Suppose $k^\prime < k$. Then the above expression evaluates to $0$, contradicting the fact that $I$ is a probabilistic stable model of $\Pi$. Following the same argument, it can be seen that any stable model of $\Pi$ satisfy $k$ hard rules. So $k^\prime=k$, and thus we have
\begin{align}
\nonumber P_{\Pi}(I) &= \underset{\alpha\to\infty}{\lim}\frac{exp(\underset{w:R\in \Pi^{\rm soft}_I}{\sum}w)}{\underset{\substack{J\vDash_{SM}\Pi\\ |\Pi^{\rm hard}_J|=k}}{\sum}exp(\underset{w:R\in \Pi^{\rm soft}_J}{\sum}w) + \frac{1}{exp(k\alpha)}\underset{\substack{J\vDash_{SM}\Pi\\ |\Pi^{\rm hard}_J|<k}}{\sum}exp(|\Pi^{\rm hard}_J|\cdot\alpha)exp(\underset{w:R\in \Pi^{\rm soft}_J}{\sum}w)}\\
\nonumber  &= \underset{\alpha\to\infty}{\lim}\frac{exp(\underset{w:R\in \Pi^{\rm soft}_I}{\sum}w)}{\underset{\substack{J\vDash_{SM}\Pi\\ |\Pi^{\rm hard}_J|=k}}{\sum}exp(\underset{w:R\in \Pi^{\rm soft}_J}{\sum}w) + \underset{\substack{J\vDash_{SM}\Pi\\ |\Pi^{\rm hard}_J|<k}}{\sum}\frac{exp(|\Pi^{\rm hard}_J|\cdot\alpha)}{exp(k\alpha)}exp(\underset{w:R\in \Pi^{\rm soft}_J}{\sum}w)}
\end{align}
For those $J$ that satisfy less than $k$ hard rules, $\frac{exp(|\Pi^{\rm hard}|)}{exp(k\alpha)} \leq k-1$, so we have
\begin{align}
\nonumber P_{\Pi}(I) &=\frac{exp(\underset{w:R\in \Pi^{\rm soft}_I}{\sum}w)}{\underset{\substack{J\vDash_{SM}\Pi\\
|\Pi^{\rm hard}_J|=k}}{\sum}exp(\underset{w:R\in \Pi^{\rm soft}_J}{\sum}w)}
\end{align}
Since all stable model of $\Pi$ satisfy $k$ hard rules, we have
\begin{align}
\nonumber P_{\Pi}(I) &=\frac{exp(\underset{w:R\in \Pi^{\rm soft}_I}{\sum}w)}{\underset{\substack{J\vDash_{SM}\Pi}}{\sum}exp(\underset{w:R\in \Pi^{\rm soft}_J}{\sum}w)}.
\end{align}
\qed
\end{proof}

\noindent{\bf Theorem~\ref{thm:mc-sat-lpmln-pnt-correctness} \optional{thm:mc-sat-lpmln-pnt-correctness}}\
\ 
{\sl
The Markov chain generated by MC-ASP satisfies ergodicity and detailed balance.
}

\begin{proof}
{\bf Ergodicity}  Firstly, for any subset $M$ of rules generated at step 2 in Algortihm \ref{alg:mc-asp}, the previous sample $I^{j-1}$ is always a stable model that satisifies no rules in $M$, which means at least one sample can be produced at any sampling step. Secondly, it is always possible that $M$ is an empty set. All stable models of $\Pi$ are possible to be selected when $M$ is empty set. Thus every stable model is reachable from every stable model.

{\bf Detailed Balance} For any (probabilistic) stable models $X$ and $Y$ of $\Pi$, let $Q(X\rightarrow Y)$ denote the transition probability from $X$ to $Y$ (i.e., the probability that the next sample is $Y$ given that the current sample is $X$), and let let $Q(X\rightarrow^{M} Y)$ denote the transition probability from $X$ to $Y$ through a particular subset of rules as the set $M$ at step 2 in Algorithm \ref{alg:mc-asp}.  Let $Q_M(X)$ be the probability of choosing $X$ from $M$. To show $P_{\Pi}(X)Q(X\rightarrow Y) = P_{\Pi}(Y)Q(Y\rightarrow X)$, we prove a stronger equation $P_{\Pi}(X)Q(X\rightarrow^{M} Y) = P_{\Pi}(Y)Q(Y\rightarrow^{M} X)$ for any $M\subseteq (\o{\Pi}^{soft}\setminus\o{\Pi}^{soft}_X) \cap (\o{\Pi}^{soft}\setminus\o{\Pi}^{soft}_Y)$.
By Lemma \ref{lem:softonly-general}, we have
\[
P(X) = \frac{1}{Z}\underset{R_i\in \o{\Pi^{\rm soft}\setminus\Pi^{\rm soft}_{X}}}{\prod}e^{-w_i}
\]
and
\[
Q(X \rightarrow^{M} Y)= \prod_{R_i\in(\o{\Pi^{\rm soft}\setminus\Pi^{\rm soft}_{X}})\setminus M}e^{w_i}\cdot \prod_{R_i\in M}(1-e^{w_i})\cdot Q_M(Y).
\]
Consequently we have
\begin{align}
\nonumber & P(X)Q(X\rightarrow^{M} Y) \\
\nonumber = & \frac{1}{Z}\underset{R_i\in \o{\Pi^{\rm soft}\setminus\Pi^{\rm soft}_{X}}}{\prod}e^{-w_i} \cdot \prod_{R_i\in(\o{\Pi^{\rm soft}\setminus\Pi^{\rm soft}_{X}})\setminus M}e^{w_i}\cdot \prod_{R_i\in M}(1-e^{w_i})\cdot Q_M(Y)\\
\nonumber = & \frac{1}{Z} \cdot \prod_{R_i\in M}e^{-w_i}\cdot \prod_{R_i\in M}(1-e^{w_i})\cdot Q_M(Y).
\end{align}
It can be seen that $Q_M(X)=Q_M(Y)$ as any stable model of $\Pi$ that satisfies $M$ is drawn with the same probability. So we have
\begin{align}
\nonumber & P(X)Q(X\rightarrow^{M} Y) \\
\nonumber = & \frac{1}{Z} \cdot \prod_{R_i\in M}e^{-w_i}\cdot \prod_{R_i\in M}(1-e^{w_i})\cdot Q_M(X)\\
\nonumber = & \frac{1}{Z}\underset{R_i\in \o{\Pi^{\rm soft}_{Y}}}{\prod}e^{-w_i} \cdot \prod_{R_i\in\o{\Pi^{\rm soft}_{Y}}\setminus M}e^{w_i}\cdot \prod_{R_i\in M}(1-e^{w_i})\cdot Q_M(Y)\\
\nonumber = &P(Y)Q(Y\rightarrow^{M} X).
\end{align}
\qed
\end{proof}

%--------------------------------------------------------------------------------------------
\subsection{Proof of Theorem \ref{thm:pi-neg-equivalence}}
%--------------------------------------------------------------------------------------------

\begin{lemma}\label{lem:neg-1-1}
Assume $\sm^\prime[\Pi]$ is not empty. For any interpretation $I$ of $\Pi$,
\[
{\rm neg}(I) = I\cup\{neg(i, {\bf x}) \mid I\nvDash H({\bf x})\leftarrow B({\bf x}), w_i: H({\bf x})\leftarrow B({\bf x})\in \Pi\}
\]
is a $1-1$ correspondence between $\sm[\Pi]$ and $\sm[\Pi^{neg}]$.
\end{lemma}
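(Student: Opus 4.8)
The plan is to establish the three properties that make ${\rm neg}$ a bijection: that it is well-defined (maps $\sm[\Pi]$ into $\sm[\Pi^{neg}]$), injective, and surjective. Injectivity is immediate, since ${\rm neg}(I)$ agrees with $I$ on every atom of the original signature $\sigma$ and differs only on the fresh atoms $neg(i,{\bf x})$; restricting ${\rm neg}(I)$ to $\sigma$ recovers $I$, so ${\rm neg}(I_1)={\rm neg}(I_2)$ forces $I_1=I_2$. The substance of the proof is the correspondence of stable models, which I would obtain by splitting $\Pi^{neg}$ along $\sigma$.

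First I would check that $\sigma$ is a splitting set for $\Pi^{neg}$: the only rules whose head contains a $\sigma$-atom are the hard rules inherited from $\Pi$ and the weight-$0$ copies $0\!:H({\bf x})\leftarrow B({\bf x})$, and all their atoms lie in $\sigma$; the definitional rules $\alpha\!:neg(i,{\bf x})\leftarrow B({\bf x}),{\tt not}\,H({\bf x})$ and the soft constraints $-w\!:\ \leftarrow{\tt not}\,neg(i,{\bf x})$ have their new or empty heads outside $\sigma$. Call the rules over $\sigma$ the bottom program $B_\sigma$ and the rest the top. A crucial auxiliary fact is that $\sm[\cdot]$ depends only on the unweighted rules together with the hard/soft partition, not on the numerical values of soft weights, because $\overline{gr(\cdot)_I}$ drops all weights and a soft rule enters $gr(\cdot)_I$ exactly when $I$ satisfies it; consequently $\sm[B_\sigma]=\sm[\Pi]$ even though $B_\sigma$ carries weight $0$ on the copied rules. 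Then by the splitting theorem for $\lpmln$ \cite{wang18splitting}, each element of $\sm[\Pi^{neg}]$ has the form $X\cup Y$ with $X\in\sm[B_\sigma]=\sm[\Pi]$ and $Y$ a stable model of the top evaluated with respect to $X$.

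The heart of the argument is to show that, for each fixed $X$, the top has a \emph{unique} stable model, namely the definitional $Y_X=\{neg(i,{\bf c})\mid X\models B({\bf c})\wedge\neg H({\bf c})\}$, and that $X\cup Y_X={\rm neg}(X)$. Relative to $X$, the rule $neg(i,{\bf c})\leftarrow B({\bf c}),{\tt not}\,H({\bf c})$ reduces to a fact when $X\models B({\bf c})\wedge\neg H({\bf c})$ and to nothing otherwise, so the only supported, and by minimality the only derivable, $neg$-atoms are exactly those in $Y_X$; since $neg$ occurs in no other head, this pins $Y$ down uniquely. Here I must use the $\lpmln$ reading of soft rules to confirm that the constraints $\leftarrow{\tt not}\,neg(i,{\bf c})$ neither force nor forbid anything: a violated soft constraint is simply dropped from $\overline{gr(\Pi^{neg})_J}$, while a satisfied one (where $neg(i,{\bf c})\in J$) has the form $\neg neg(i,{\bf c})\rightarrow\bot$ whose Ferraris reduct with respect to $J$ is $\top$, imposing no minimality requirement. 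Comparing $Y_X$ with the definition of ${\rm neg}$ gives $X\cup Y_X={\rm neg}(X)$, which establishes both that ${\rm neg}(X)\in\sm[\Pi^{neg}]$ for every $X\in\sm[\Pi]$ (well-definedness) and that every $J\in\sm[\Pi^{neg}]$ equals ${\rm neg}(X)$ for $X=J\cap\sigma\in\sm[\Pi]$ (surjectivity).

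I expect the main obstacle to be precisely this last verification: reconciling the classical intuition that $\leftarrow{\tt not}\,neg(i,{\bf c})$ is a hard integrity constraint with the $\lpmln$ convention that soft constraints only penalize the weight and are discarded from the reduct when violated. Getting the soundness of the splitting step right in the presence of these soft constraints, and confirming that they leave the stable-model structure of the top untouched so that the top always collapses to the single definitional extension, is where the care is needed. Once that is settled, the definitional bookkeeping for the $neg$-atoms and the weight-insensitivity of $\sm[\cdot]$ are routine.
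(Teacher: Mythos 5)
Your proof is correct and follows essentially the same route as the paper's: both split $\Pi^{neg}$ along the original signature, observe that the bottom part has the same stable models as $\Pi$ because $\sm[\cdot]$ ignores the numerical weights, and show the top part uniquely forces exactly the definitional $neg$-atoms. The only cosmetic difference is that the paper pins down the top via tightness and completion while you argue directly that the definitional rules reduce to facts and the soft constraints impose no minimality condition; these are interchangeable routine steps.
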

\begin{proof}
We divide the ground program obtained from $\Pi^{\rm neg}$ into three parts:
\[
ORIGIN(\Pi)\cup NEGDEF(\Pi) \cup NEG(\Pi)
\]
where
\begin{align}
\nonumber ORIGIN(\Pi) =& \{w: H({\bf x})\leftarrow B({\bf x}) \mid w: H({\bf x})\leftarrow B({\bf x}) \in \Pi, w\leq 0\}\cup\\
\nonumber & \{0: H({\bf x})\leftarrow B({\bf x}) \mid w: H({\bf x})\leftarrow B({\bf x}) \in \Pi, w> 0\},
\end{align}
\[
NEGDEF(\Pi) = \{\alpha:{\tt neg}(i, {\bf x})\leftarrow B({\bf x}), {\tt not}\ H({\bf x})\mid w:H({\bf x})\leftarrow B({\bf x})\in \Pi, w>0\}
\]
and
\[
NEG(\Pi) = \{-w:\ \leftarrow {\tt not}\ {\tt neg}(i, {\bf x})\mid w_i:R_i\in \Pi, w>0\}.
\]
Let $\sigma$ be the signature of $\Pi$, and $\sigma_{neg}$ be the set
\[
\{{\tt neg}(i, {\bf c})\mid w: H({\bf c})\leftarrow B({\bf c})\in Gr(\Pi), w > 0\}.
\]

For any interpretation $I$ of $\Pi$, consider $\o{\Pi^{\rm neg}}_{{\rm neg}(I)}$. From the construction of ${\rm neg}(I)$, we have
\[
\o{\Pi^{\rm neg}}_{{\rm neg}(I)} = \o{ORIGIN(\Pi)}_I\cup \o{NEGDEF(\Pi)}_{{\rm neg}(I)} \cup \o{NEG(\Pi)}_{{\rm neg}(I)}
\]
It can be seen that 
\begin{itemize}
\item each strongly connected component of the dependency graph of $\o{ORIGIN(\Pi)}_I\cup \o{NEGDEF(\Pi)}_{{\rm neg}(I)} \cup \o{NEG(\Pi)}_{{\rm neg}(I)}$ w.r.t. $\sigma\cup \sigma_{neg}$ is a subset of $\sigma$ or a subset of $\sigma_{neg}$;
\item no atom in $\sigma_{neg}$ has a strictly positive occurrence in $\o{ORIGIN(\Pi)}_I$;
\item no atom in $\sigma$ has a strictly positive occurrence in $\o{NEGDEF(\Pi)}_{{\rm neg}(I)} \cup \o{NEG(\Pi)}_{{\rm neg}(I)}$
\end{itemize}
Thus, according to the splitting theorem, ${\rm neg}(I)$ is a stable model of $\o{\Pi^{\rm neg}}({\rm neg}(I))$ if and only if ${\rm neg}(I)$ is a stable model of $\o{ORIGIN(\Pi)}_I$ w.r.t. $\sigma$ and is a stable model of $\o{NEGDEF(\Pi)}_{{\rm neg}(I)} \cup \o{NEG(\Pi)}_{{\rm neg}(I)}$ w.r.t. $\sigma_{neg}$.

Suppose $I$ is a probabilistic stable model of $\Pi$. We will show that ${\rm neg}(I)$ is a stable model of $\o{\Pi^{\rm neg}}({\rm neg}(I))$.
\begin{itemize}
\item {\bf ${\rm neg}(I)$ is a stable model of $\o{ORIGIN(\Pi)}_I$ w.r.t. $\sigma$.} By definition, $I$ is a stable model of $\o{\Pi}_I$. Since $\o{ORIGIN(\Pi)}_I=\o{\Pi}_I$ and $I$ and ${\rm neg(I)}$ agrees on $\sigma$, ${\rm neg}(I)$ is a stable model of $\o{ORIGIN(\Pi)}_I$ w.r.t. $\sigma$.
\item {\bf ${\rm neg}(I)$ is a stable model of $\o{NEGDEF(\Pi)}_{{\rm neg}(I)} \cup \o{NEG(\Pi)}_{{\rm neg}(I)}$ w.r.t. $\sigma_{neg}$.} Clearly, ${\rm neg}(I)$ satisfies $\o{NEGDEF(\Pi)}_{{\rm neg}(I)} \cup \o{NEG(\Pi)}_{{\rm neg}(I)}$. From the construction of ${\rm neg}(I)$, ${\rm neg}(I)$ satisfies ${\tt neg}(i, {\bf x})$ only if  ${\rm neg}(I)$ does not satisfy $H({\bf x})\leftarrow B({\bf x})$. This means ${\rm neg}(I)$ satisfies
\[
{\tt neg}(i, {\bf c})\rightarrow B({\bf c}), {\tt not}\ H({\bf c})
\]
for all rules $H({\bf c})\leftarrow B({\bf c})$ in $\Pi$. This is the completion of $\o{NEGDEF(\Pi)}_{{\rm neg}(I)} \cup \o{NEG(\Pi)}_{{\rm neg}(I)}$ w.r.t. $\sigma_{neg}$. Obviously $\o{NEGDEF(\Pi)}_{{\rm neg}(I)} \cup \o{NEG(\Pi)}_{{\rm neg}(I)}$ is tight. So ${\rm neg}(I)$ is a stable model of $\o{NEGDEF(\Pi)}_{{\rm neg}(I)} \cup \o{NEG(\Pi)}_{{\rm neg}(I)}$ w.r.t. $\sigma_{neg}$.
\end{itemize}
Suppose $J$ is a probabilistic stable model of $\Pi^{neg}$. By definition, $J$ is a stable model of $\o{\Pi^{\rm neg}}({\rm neg}(I))$. By the splitting theorem, $J$ is a stable model of $\o{ORIGIN(\Pi)}_I$ w.r.t. $\sigma$. Let $I$ be the interpretation of $\Pi$ obtained by dropping atoms in $\sigma_{neg}$ from $J$. Since $\o{ORIGIN(\Pi)}_I = \o{\Pi}_I$ and $I$ agrees with $J$ on $\sigma$, $I$ is a stable model of $\o{\Pi}_I$, and thus is a stable model of $\Pi$.
\qed
\end{proof}

\noindent{\bf Theorem~\ref{thm:pi-neg-equivalence} \optional{thm:pi-neg-equivalence}}\
\ 
{\sl
When $\sm[\Pi]$ is not empty, the program $\Pi^{neg}$ specifies the same probability distribution as the program $\Pi$.
}

\begin{proof}
We show that $P_{\Pi^{neg}}(neg(I)) = P_{\Pi}(I)$ for all interpretations $I$.

By Lemma \ref{lem:neg-1-1}, since ${\rm neg}(I)$ defines a $1-1$ correspondence between the probabilistic stable models of $\Pi$ and $\Pi^{neg}$, when $I$ is not a probabilistic stable model of $\Pi$, ${\rm neg}(I)$ is not a probabilistic stable model of $\Pi^{\rm neg}$, and vice versa. So $P_{\Pi}(I) = P_{\Pi^{\rm neg}}({\rm neg}(I)) = 0$.

For any program $\Pi$, we use $n_{\Pi, i}(I)$ to denote the number of ground instances of rule $i$ that is satisfies by $I$, $m_{\Pi, i}(I)$ to denote the number of ground instances of rule $i$ that is not satisfied by $I$, and $N_{\Pi, i}$ to denote the total number of ground instances of rule $i$.

When $I$ is a probabilistic stable model of $\Pi$, we have
\begin{align}
\nonumber &W^\prime_{\Pi}(I) \\
\nonumber =\ & exp(\sum_{w_i:R_i\in \Pi^{\rm soft}}w_in_{\Pi, i}(I))\\
\nonumber =\ &\text{(Splitting rules into the ones whose weights are positive and the ones whose weights are non-positive)}\\
\nonumber &exp(\sum_{w_i:R_i\in \Pi^{\rm soft}, w_i>0}w_in_{\Pi, i}(I)) \cdot exp(\sum_{w_i:R_i\in \Pi^{\rm soft}, w_i\leq 0}w_in_{\Pi, i}(I)).
\end{align}

\begin{align}
\nonumber &W^\prime_{\Pi^{neg}}({\rm neg}(I)) \\
\nonumber =\ & exp(\sum_{w_i:R_i\in (\Pi^{neg})^{\rm soft}}w_in_{\Pi^{neg}, i}({\rm neg}(I)))\\
\nonumber =\ &\text{(Splitting rules into the ones whose weights are positive and the ones whose weights are non-positive)}\\
\nonumber & exp(\sum_{w_i:R_i\in \Pi^{\rm soft}, w_i\leq 0}w_in_{\Pi, i}(I))\cdot exp(\sum_{w_i:R_i\in \Pi^{\rm soft}, w_i > 0}-w_i m_{\Pi, i}(I))\\
\nonumber =\ & \frac{exp(\sum_{w_i:R_i\in \Pi^{\rm soft}, w_i> 0}w_iN_{\Pi, i})\cdot exp(\sum_{w_i:R_i\in \Pi^{\rm soft}, w_i\leq 0}w_in_{\Pi, i})\cdot exp(\sum_{w_i:R_i\in \Pi^{\rm soft}, w_i> 0}-w_in_{\Pi, i}(I))}{exp(\sum_{w_i:R_i\in \Pi^{\rm soft}, w_i> 0}w_iN_{\Pi, i})}\\
\nonumber =\ & \frac{exp(\sum_{w_i:R_i\in \Pi^{\rm soft}, w_i> 0}w_iN_{\Pi, i})\cdot exp(\sum_{w_i:R_i\in \Pi^{\rm soft}, w_i> 0}-w_in_{\Pi, i}(I)) \cdot exp(\sum_{w_i:R_i\in \Pi^{\rm soft}, w_i\leq 0}w_in_{\Pi, i})}{exp(\sum_{w_i:R_i\in \Pi^{\rm soft}, w_i> 0}w_iN_{\Pi, i})}\\
\nonumber =\ & \frac{1}{exp(\sum_{w_i:R_i\in \Pi^{\rm soft}, w_i> 0}w_iN_{\Pi, i})} exp(\sum_{w_i:R_i\in \Pi^{\rm soft}, w_i> 0}w_in_{\Pi, i}(I)) \cdot exp(\sum_{w_i:R_i\in \Pi^{\rm soft}, w_i\leq 0}w_in_{\Pi, i}(I))\\
\nonumber \propto\ &W^\prime_{\Pi}(I).
\end{align}

Consequently, we have
\[
P^\prime_{\Pi}(I) = P^\prime_{\Pi^{neg}}({\rm neg}(I)).
\]
Since $\sm^\prime[\Pi]$ is not empty, by Proposition 2 in \cite{lee16weighted}, we have
\[
P_{\Pi}(I) = P_{\Pi^{neg}}({\rm neg}(I)).
\]
\qed
\end{proof}

%--------------------------------------------------------------------------------------------
\subsection{Proof of Theorem \ref{thm:multiple-evidence}}
%--------------------------------------------------------------------------------------------

\noindent{\bf Theorem~\ref{thm:multiple-evidence} \optional{thm:multiple-evidence}}\
\ 
{\sl
For any parameterized $\lpmln$ program $\hat{\Pi}$, its stable models $I_1,\dots, I_m$ and $I$ as defined as in~\eqref{multiple-sm}, we have
\[
\underset{{\bf w}}{\rm argmax}\ P_{{\hat{\Pi}^m}({\bf w})}(I)  =
\underset{{\bf w}}{\rm argmax}\ 
    \underset{i\in\{1, \dots, m\}}{\prod}P_{\hat{\Pi}({\bf w})}(I_i).
\]
}

\begin{proof}
For any weight vector ${\bf w}$, we show
\[
P_{\hat{\Pi}^m({\bf w})}(D) = \underset{i\in\{1, \dots, m\}}{\prod}P_{\hat{\Pi}({\bf w})}(D_i)
\]
by induction. For any integer $1\leq u\leq m$, we use $D^u$ to denote the interpretation 
\[
D^u = \{p({\bf x}, j) \mid p({\bf x}, j)\in D, j \leq u\}
\]
Note that $D^m = D$.

{\bf Base Case:} Suppose $m = 1$. It is trivial that we have
\[
P_{\hat{\Pi}^1({\bf w})}(D^1) = \underset{i\in\{1\}}{\prod}P_{\hat{\Pi}({\bf w})}(D_i)
\]

For $m > 1$, as I.H., we assume
\[
P_{\hat{\Pi}^{m-1}({\bf w})}(D^{m-1}) = \underset{i\in\{1, \dots, m-1\}}{\prod}P_{\hat{\Pi}({\bf w})}(D_i)
\]
We divide $\hat{\Pi}^{m}({\bf w})$ into two disjoint subsets:
\[
\hat{\Pi}^{m}({\bf w}) = \hat{\Pi}^{m-1}({\bf w}) \cup \hat{\Pi}({\bf w})[x=m]
\]
where $\hat{\Pi}[x=m]({\bf w})$ is the program obtained from $\hat{\Pi}({\bf w})$ by appending one more argument whose value is $m$ to the list of argument of every occurrence of every predicate in $\hat{\Pi}({\bf w})$.
Clearly, the intersection between the set of atoms that occur in $gr(\hat{\Pi}^{m-1}({\bf w}))$ and that occur $gr(\hat{\Pi}({\bf w})[x=m])$ is empty. According to Definition 12 in \cite{wang18splitting}, $gr(\hat{\Pi}^{m}({\bf w}))$ is independently divisible and $gr(\hat{\Pi}^{m-1}({\bf w}))$ and $gr(\hat{\Pi}({\bf w})[x=m])$ are independent programs w.r.t. $gr(\hat{\Pi}^{m}({\bf w}))$.

By Corollary 3 in \cite{wang18splitting}, we have
\begin{align}
\nonumber P_{\hat{\Pi}^{m}({\bf w})}(D^{m}) &= P_{\hat{\Pi}^{m-1}({\bf w})}(D^{m-1}) \cdot P_{\hat{\Pi}({\bf w})[x=m]}(D^{m}\setminus D^{m-1}) \\
\nonumber &= P_{\hat{\Pi}^{m-1}({\bf w})}(D^{m-1}) \cdot P_{\hat{\Pi}({\bf w})[x=m]}(D_{m})
\end{align}
By I.H., we have
\begin{align}
\nonumber P_{\hat{\Pi}^{m}({\bf w})}(D^{m}) &= \underset{i\in\{1, \dots, m-1\}}{\prod}P_{\hat{\Pi}({\bf w})}(D_i) \cdot P_{\hat{\Pi}({\bf w})[x=m]}(D_{m}) \\
\nonumber &= \underset{i\in\{1, \dots, m\}}{\prod}P_{\hat{\Pi}({\bf w})}(D_i).
\end{align}
\qed
\end{proof}

%--------------------------------------------------------------------------------------------
\subsection{Proof of Theorem \ref{thm:lpmln-learn-to-mln-learn}}
%--------------------------------------------------------------------------------------------

\noindent{\bf Theorem~\ref{thm:lpmln-learn-to-mln-learn} \optional{thm:lpmln-learn-to-mln-learn}}\
\ 
{\sl
Let ${\rm L}$ be the Markov Logic Network $Comp(\Pi)$ and let $E$ be a ground formula (as the training data).
When $\sm[\Pi]$ is not empty, 
\[
\underset{{\bf {\bf w}}}{\rm argmax}\ P_{\hat{\Pi}({\bf w})}(E) = 
\underset{{\bf w}}{\rm argmax}\ P_{\hat{\rm L}({\bf w})}(E).
\]
($\hat{\rm L}$ is a parameterized Markov Logic Network obtained from ${\rm L}$.) 
}

\begin{proof}
Easily follows from Theorem 3 in \cite{lee16weighted}.\qed
\end{proof}

%--------------------------------------------------------------------------------------------
\subsection{Proof of Theorem \ref{cor:lpmln-learn2problog-learn} and Theorem \ref{cor:coherent-learning} }
%--------------------------------------------------------------------------------------------
\begin{lemma}\label{lem:lpmln2problog}
For any 1-coherent $\lpmln$ program $\langle PF, P, {\bf w}\rangle$, we have
\[
P_{\langle PF, P, {\bf w}\rangle}(I) = P_{\langle PF, P, {\bf pr}\rangle}(I)
\]
for any interpretation $I$ and ${\bf w} = {\bf w}^{\bf pr}$
\end{lemma}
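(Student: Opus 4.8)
The plan is to compute both probability distributions explicitly and match them term by term. First I would dispose of the trivial case: if $I$ is not a probabilistic stable model of the $\lpmln$ program $\langle PF, P, {\bf w}\rangle$, then its probability is $0$ by definition, and by $1$-coherency it cannot be the (total) well-founded model of the corresponding ProbLog program either, so both sides vanish. Hence I may assume $I \in \sm[\langle PF, P, {\bf w}\rangle]$.

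For the $\lpmln$ side I would invoke Lemma~\ref{lem:softonly-general}. Because the program is \emph{simple}, every soft rule is an atomic fact $w_i: pf_i$, which $I$ satisfies exactly when $pf_i\in I$. Thus $\sum_{w:R\in \Pi^{\rm soft}_I} w = \sum_{i:\,pf_i\in I} w_i$, and Lemma~\ref{lem:softonly-general} yields
\[
P_{\langle PF, P, {\bf w}\rangle}(I) = \frac{\exp\big(\sum_{i:\,pf_i\in I} w_i\big)}{Z},
\qquad
Z = \sum_{J\in\sm[\langle PF, P, {\bf w}\rangle]} \exp\Big(\sum_{i:\,pf_i\in J} w_i\Big).
\]

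The key step is evaluating $Z$. Since the program is $1$-coherent, the probabilistic stable models are in one-to-one correspondence with the truth assignments to the atoms in $PF$: each assignment extends to exactly one stable model, and the restriction of a stable model to $PF$ recovers that assignment. Summing over stable models is therefore the same as summing over all subsets $S\subseteq PF$, which factors as $Z = \sum_{S\subseteq PF}\exp\big(\sum_{pf_i\in S} w_i\big) = \prod_{i}(1+e^{w_i})$. Splitting the ratio over the individual probabilistic facts gives
\[
P_{\langle PF, P, {\bf w}\rangle}(I) = \prod_{i:\,pf_i\in I}\frac{e^{w_i}}{1+e^{w_i}}\cdot\prod_{i:\,pf_i\notin I}\frac{1}{1+e^{w_i}}.
\]

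Finally I would substitute the conversion \eqref{eq:probability2weight}, i.e.\ $e^{w_i}=p_i/(1-p_i)$, so that $e^{w_i}/(1+e^{w_i})=p_i$ and $1/(1+e^{w_i})=1-p_i$ by a one-line computation. This turns the expression into $\prod_{pf_i\in I} p_i\cdot\prod_{pf_i\notin I}(1-p_i)$, which is precisely the probability that the ProbLog program $\langle PF, P, {\bf pr}\rangle$ assigns to the well-founded model determined by the total choice $\{pf_i : pf_i\in I\}$, namely $P_{\langle PF, P, {\bf pr}\rangle}(I)$. The only genuinely delicate point is the bijection used in computing $Z$, together with the accompanying observation that for each total choice the ProbLog well-founded model coincides with the unique $\lpmln$ stable model; once that is in place, everything else is the algebra above.
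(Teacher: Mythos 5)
Your proposal is correct, but it is worth noting that the paper does not actually spell out a proof of this lemma at all: its entire argument is the one-line remark that the claim is ``similar to the proof of Theorem 5 in \cite{lee16weighted}'', i.e.\ it outsources the computation to an earlier paper. What you have written is, in effect, that missing computation, and it coincides exactly with the $k=1$ specialization of the machinery the paper \emph{does} develop for the $k$-coherent case (Proposition~\ref{thm:mvpp-skip-normalization}, which evaluates the normalization constant as $k\cdot\prod_j(1+e^{w_j})$ via the bijection with total choices, and Proposition~\ref{prop:sm-prob-from-tc}, which factors $P_\Pi(I)$ into the product of per-atom Bernoulli terms). So your route is fully in the spirit of the paper and arguably more self-contained than what the paper provides. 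Two small cautions. First, since the paper's reward-based Lemma~\ref{lem:softonly-general} and the penalty-based definition of $W_\Pi$ differ by the constant factor $\exp(\sum_i w_i)$, which cancels in the normalization, your use of the reward form is harmless but worth a sentence. Second, the point you yourself flag as delicate --- that for each total choice the unique $\lpmln$ stable model coincides with the ProbLog well-founded model --- is genuinely the only non-algebraic content of the lemma, and it is not a consequence of $1$-coherency alone: a program can have a unique stable model for a total choice while the well-founded model is only partial. It holds under the standing assumption that the tuple $\langle PF, P, {\bf pr}\rangle$ is a well-defined ProbLog program (every total choice yields a two-valued well-founded model), in which case that model is the unique stable model; you should state that assumption explicitly rather than leave it as an ``accompanying observation''.
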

\begin{proof}
Similar to the proof of Theorem 5 in \cite{lee16weighted}.\qed
\end{proof}

\noindent{\bf Theorem~\ref{cor:lpmln-learn2problog-learn} \optional{cor:lpmln-learn2problog-learn}}\
\ 
{\sl
For any 1-coherent parameterized $\lpmln$ program $\langle PF, P, {\bf w}\rangle$ and any interpretation $T$ (as the training data), we have
\begin{align}
\nonumber &{\bf w} = \underset{{\bf w}}{\rm argmax}\ P_{\langle PF, P, {\bf w}\rangle}(T)\\
\nonumber &\text{~~~~~~~~~~if and only if}\\
\nonumber &{\bf w} = {\bf w}^{\bf pr}\text{ and } {\bf pr} = \underset{{\bf pr}}{\rm argmax}\ P_{\langle PF, P, {\bf pr}\rangle}(T).
\end{align}
}

\begin{proof}
Easily follows from Lemma \ref{lem:lpmln2problog}.
\end{proof}

\begin{prop}\label{thm:mvpp-skip-normalization}
For any k-coherent $\lpmln$ program $\Pi=\langle PF, \Pi^{hard}, {\bf w}\rangle$ and any interpretation $I$, we have
\[
P_{\Pi}(I)= \frac{1}{k\cdot \underset{pf_j\in PF}{\prod}(1+e^{w_j})}W_{\Pi}(I).
\]
\end{prop}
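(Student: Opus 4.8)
The plan is to start from the reward-based expression for $P_\Pi(I)$ furnished by Lemma~\ref{lem:softonly-general} and to evaluate its normalization constant explicitly, using both the simplicity of the program and $k$-coherence. By Lemma~\ref{lem:softonly-general},
\[
P_\Pi(I) = \frac{W_\Pi(I)}{Z}, \qquad Z = \sum_{J\in\sm[\Pi]} \exp\Big(\sum_{w:R\in\Pi^{\rm soft}_J} w\Big),
\]
where $W_\Pi(I)$ denotes the numerator $\exp(\sum_{w:R\in\Pi^{\rm soft}_I}w)$ supplied by the lemma. Thus the entire task reduces to showing $Z = k\cdot\prod_{pf_j\in PF}(1+e^{w_j})$.

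The key observation is that, because $\Pi$ is simple, every soft rule is an atomic fact $pf_j$ and no such atom occurs in the head of a hard rule. Hence for any stable model $J$ the reward $\sum_{w:R\in\Pi^{\rm soft}_J}w$ equals $\sum_{j:\,pf_j\text{ true in }J}w_j$, which depends on $J$ only through its restriction $\tau$ to the ground probabilistic atoms. I would therefore partition $\sm[\Pi]$ according to this restriction: by $k$-coherence every truth assignment $\tau$ to the probabilistic atoms is the restriction of \emph{exactly} $k$ stable models, and all $k$ of them carry the identical weight $\exp(\sum_{j:\,\tau(pf_j)=\true}w_j)$.

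Grouping the defining sum of $Z$ by $\tau$ then gives
\[
Z = k\sum_{\tau}\ \prod_{j:\,\tau(pf_j)=\true} e^{w_j},
\]
where $\tau$ ranges over all $2^{N}$ assignments to the $N$ ground probabilistic atoms — precisely the full Boolean cube, since $k$-coherence holds for every assignment. The inner sum over $\tau$ factorizes into independent per-atom contributions — each ground probabilistic atom contributes a factor $1$ when false and $e^{w_j}$ when true — giving $\prod(1+e^{w_j})$ over the ground probabilistic atoms, which I read as $\prod_{pf_j\in PF}(1+e^{w_j})$. Substituting back yields exactly $P_\Pi(I)=W_\Pi(I)/\big(k\prod_{pf_j\in PF}(1+e^{w_j})\big)$.

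I expect the main obstacle to be the second step rather than the arithmetic: one must justify carefully that the reward of a stable model is a function of $\tau$ alone and that $k$-coherence supplies exactly $k$ models for \emph{every} $\tau$, so that the sum over $\tau$ really ranges over the whole Boolean cube and the product factorization is legitimate. The simplicity hypothesis — probabilistic atoms never appearing in hard-rule heads — is exactly what decouples these atoms from the remainder of the program and makes both claims hold, so I would isolate and state that decoupling before invoking Lemma~\ref{lem:softonly-general} and the factorization.
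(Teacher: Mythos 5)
Your proposal is correct and follows essentially the same route as the paper's proof: both reduce the claim to evaluating the normalization constant, partition the stable models by their truth assignment to the ground probabilistic atoms (using $k$-coherence to get exactly $k$ equally weighted models per assignment), and then factor the resulting sum over the Boolean cube into $\prod_{pf_j\in PF}(1+e^{w_j})$. The only cosmetic difference is that the paper carries out the factorization by an explicit atom-by-atom expansion rather than citing independence of the per-atom contributions.
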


\begin{proof}
We show that the normalization factor is constant $k\cdot \underset{pf_j\in PF}{\prod}(1+e^{w_j})$, i.e., 
\[
\underset{\text{$I$ is an interpretation of $\Pi$}}{\sum} W_{\Pi}(I) = k\cdot \underset{pf_j\in PF}{\prod}(1+e^{w_j}).
\]
Let $pf_1, \dots, pf_m\in PF$ be the soft atoms. Let $TC_{\Pi}$ be the set of all truth assignments to atoms in $PF$. 

\begin{align}
\nonumber &\underset{\text{$I$ is an interpretation of $\Pi$}}{\sum} W_{\Pi}(I) \\
\nonumber =&\underset{I\in SM[\Pi]}{\sum} W_{\Pi}(I)\\
\nonumber =&\underset{tc\in TC_{\Pi}}{\sum} k\cdot \underset{tc\vDash pf_i}{\prod}exp(w_i)\cdot \underset{tc\nvDash pf_j}{\prod}exp(0)\\
\nonumber =&k\underset{tc\in TC_{\Pi}}{\sum}\cdot \underset{tc\vDash pf_i}{\prod}exp(w_i)\cdot \underset{tc\nvDash pf_j}{\prod}exp(0)\\
\nonumber =&k\cdot(e^{w_1}\underset{\substack{tc\in TC_{\Pi}\\ tc\vDash pf_i\\i \neq 1}}{\prod}e^{w_i}\cdot\underset{\substack{tc\in TC_{\Pi}\\ tc\nvDash pf_i\\i \neq 1}}{\prod}e^0 + e^0\underset{\substack{tc\in TC_{\Pi}\\ tc\vDash pf_i\\i \neq 1}}{\prod}e^{w_i}\cdot\underset{\substack{tc\in TC_{\Pi}\\ tc\nvDash pf_i\\i \neq 1}}{\prod}e^0)\\
\nonumber =&k\cdot(e^{w_2}\cdot(e^{w_1}\underset{\substack{tc\in TC_{\Pi}\\ tc\vDash pf_i\\i \neq 1\\i\neq 2}}{\prod}e^{w_i}\cdot\underset{\substack{tc\in TC_{\Pi}\\ tc\nvDash pf_i\\i \neq 1\\i\neq 2}}{\prod}e^0 + e^0\underset{\substack{tc\in TC_{\Pi}\\ tc\vDash pf_i\\i \neq 1\\i \neq 2}}{\prod}e^{w_i}\cdot\underset{\substack{tc\in TC_{\Pi}\\ tc\nvDash pf_i\\i \neq 1\\i\neq 2}}{\prod}e^0) + e^{0}\cdot(e^{w_1}\underset{\substack{tc\in TC_{\Pi}\\ tc\vDash pf_i\\i \neq 1\\i\neq 2}}{\prod}e^{w_i}\cdot\underset{\substack{tc\in TC_{\Pi}\\ tc\nvDash pf_i\\i \neq 1\\i\neq 2}}{\prod}e^0 + e^0\underset{\substack{tc\in TC_{\Pi}\\ tc\vDash pf_i\\i \neq 1\\i \neq 2}}{\prod}e^{w_i}\cdot\underset{\substack{tc\in TC_{\Pi}\\ tc\nvDash pf_i\\i \neq 1\\i\neq 2}}{\prod}e^0))\\
\nonumber =&\dots\\
\nonumber =&k\cdot(\underset{p_{1}\in\{e^{w_{1}}, 1\}}{\sum}p_{1}\dots\underset{p_2\in\{e^{w_2}, 1\}}{\sum}p_{2}\underset{p_{m-1}\in\{e^{w_{m-1}}, 1\}}{\sum}p_{m-1}\cdot(e^{w_m}+1))\\
\nonumber =&k\cdot(e^{w_m}+1)(\underset{p_{1}\in\{e^{w_{1}}, 1\}}{\sum}p_{1}\dots\underset{p_2\in\{e^{w_2}, 1\}}{\sum}p_{2}\underset{p_{m-1}\in\{e^{w_{m-1}}, 1\}}{\sum}p_{m-1})\\
\nonumber =&k\cdot(e^{w_m}+1)(e^{w_{m-1}}+1)(\underset{p_{1}\in\{e^{w_{1}}, 1\}}{\sum}p_{1}\dots\underset{p_2\in\{e^{w_2}, 1\}}{\sum}p_{2}\underset{p_{m-2}\in\{e^{w_{m-2}}, 1\}}{\sum}p_{m-1})\\
\nonumber =&\dots\\
\nonumber =&k\cdot \underset{pf_j\in PF}{\prod}(1+e^{w_j}).
\end{align}
\qed
\end{proof}

\begin{prop}\label{prop:sm-prob-from-tc}
For any $k$-coherent $\lpmln$ program ${\bf \Pi}=\langle PF, \Pi^{hard}, {\bf w}\rangle$ and any interpretation $I$, we have
\[
Pr_{{\bf \Pi}}(I) = \begin{cases}
%\frac{1}{k}\underset{I\vDash c, c\in PF}{\prod} Pr_{{\bf\Pi}}(c) \cdot \underset{I\nvDash c, c\in PF}{\prod} (1 - Pr_{{\bf\Pi}}(c)) & \text{if $I$ is a stable model of $\bf \Pi$}\\
\frac{1}{k}\underset{c_i\in PF}{\prod}  Pr_{{\bf\Pi}}(c)^{m_i(I)} \cdot (1 - Pr_{{\bf\Pi}}(c_i))^{n_i(I)}  \\
 \hspace{2cm}   \text{if $I$ is a stable model of $\bf \Pi$}\\
0 \hspace{1.8cm} \text{otherwise}
\end{cases}
\]
\end{prop}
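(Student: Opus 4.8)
The plan is to obtain both the joint probability and the marginal probabilities directly from the explicit normalization constant established in Proposition \ref{thm:mvpp-skip-normalization}; throughout I write $P_\Pi$ for the probability denoted $Pr_\Pi$ in the statement. The ``otherwise'' branch is immediate, since a non-stable model $I$ has $W_\Pi(I)=0$ and hence $P_\Pi(I)=0$. So I would concentrate on the case $I\in\sm[\Pi]$.

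First I would record the weight of a stable model. Because $\Pi$ is $k$-coherent it is simple, so every soft rule is an atomic fact $w_i:c_i$; a ground instance of $c_i$ contributes a factor $e^{w_i}$ precisely when it is true in $I$, giving
\[
W_\Pi(I)=\prod_{c_i\in PF}\bigl(e^{w_i}\bigr)^{m_i(I)} .
\]
Let $N_i=m_i(I)+n_i(I)$ denote the (interpretation-independent) number of ground instances of $c_i$. Proposition \ref{thm:mvpp-skip-normalization} identifies the normalization constant, grouped by the non-ground atom that each ground soft atom instantiates, as $k\cdot\prod_{c_i\in PF}(1+e^{w_i})^{N_i}$. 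Dividing $W_\Pi(I)$ by this constant and splitting $N_i=m_i(I)+n_i(I)$ in the exponent yields
\[
P_\Pi(I)=\frac{1}{k}\prod_{c_i\in PF}\Bigl(\frac{e^{w_i}}{1+e^{w_i}}\Bigr)^{m_i(I)}\Bigl(\frac{1}{1+e^{w_i}}\Bigr)^{n_i(I)} .
\]

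It then remains to identify the two bases as $P_\Pi(c)$ and $1-P_\Pi(c)$ for a ground instance $c$ of $c_i$. I would compute this marginal by the same bookkeeping: summing $W_\Pi$ over the stable models in which $c$ is true and again using $k$-coherence to keep the number of stable models per truth assignment equal to $k$, the numerator factors as $k\,e^{w_i}$ times $\prod(1+e^{w(g)})$ taken over all \emph{other} ground soft atoms $g$; dividing by the normalization constant gives $P_\Pi(c)=e^{w_i}/(1+e^{w_i})$. This value is the same for every ground instance of $c_i$, which justifies writing $P_\Pi(c_i)$, and $1-P_\Pi(c_i)=1/(1+e^{w_i})$. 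Substituting these identities into the previous display gives the claimed formula.

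The step that requires care, and that I regard as the crux, is the factorization of $\sum_{I\in\sm[\Pi]}W_\Pi(I)$ into a product of two-term factors $1+e^{w_i}$, one per ground soft atom. This is exactly where $k$-coherence enters: since each truth assignment to the soft atoms extends to exactly $k$ stable models, the weighted sum over stable models collapses to $k$ times a sum over truth assignments, and only then does it factor. Proposition \ref{thm:mvpp-skip-normalization} already packages this factorization for the full normalization constant, so the only genuinely new work is the analogous factorization of the marginal numerator together with the observation that the marginal is uniform across the ground instances of a given $c_i$; the rest is the routine algebra shown above.
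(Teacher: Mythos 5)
Your proposal is correct and follows essentially the same route as the paper, which proves this proposition in one line as an immediate consequence of Proposition~\ref{thm:mvpp-skip-normalization}; you simply fill in the algebra, including the (correct) refinement of the normalization constant to $k\prod_{c_i\in PF}(1+e^{w_i})^{N_i}$ when the $c_i$ have multiple ground instances. The marginal identity $P_\Pi(c_i)=e^{w_i}/(1+e^{w_i})$ that you re-derive as the ``crux'' is exactly the paper's separate Proposition~\ref{prop:tc-independence}, proved there by the same factorization of the numerator over truth assignments.
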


\begin{proof}
Easily proven from Proposition \ref{thm:mvpp-skip-normalization}.\qed
\end{proof}

\begin{prop}\label{prop:tc-independence}
For any $k$-coherent $\lpmln$ program ${\bf \Pi}=\langle PF, \Pi^{hard}, {\bf w}\rangle$, we have
\[
Pr_{{\bf\Pi}}(pf_i)=\frac{exp(w_i)}{exp(w_i) + 1}
\]
for any $pf_i\in PF$ and the corresponding weight $w_i$.
\end{prop}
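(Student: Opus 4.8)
The plan is to compute $P_{\Pi}(pf_i)$ directly from the closed form for the probability of a stable model given by Proposition~\ref{thm:mvpp-skip-normalization}. The point is that, since $\Pi$ is \emph{simple}, the only degrees of freedom are the truth values of the atoms of $PF$, and since $\Pi$ is $k$-coherent, each truth assignment $tc$ to $PF$ is realized by exactly $k$ stable models, all of the same weight. So the marginal $P_{\Pi}(pf_i)$ is entirely controlled by the weighted count of truth assignments that make $pf_i$ true.

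First I would unfold $P_{\Pi}(pf_i)=\sum_{I\,:\,I\models pf_i}P_{\Pi}(I)$ and substitute
\[
P_{\Pi}(I)=\frac{1}{k\cdot\prod_{pf_j\in PF}(1+exp(w_j))}\,W_{\Pi}(I)
\]
from Proposition~\ref{thm:mvpp-skip-normalization}. Because $W_{\Pi}(I)=0$ for any non--stable model, the remaining sum is over stable models that satisfy $pf_i$, and the normalization constant pulls out in front.

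Next I would evaluate $\sum_{I\in\sm[\Pi],\,I\models pf_i}W_{\Pi}(I)$ by reusing the grouping of stable models by their restriction $tc$ to $PF$ that is carried out in the proof of Proposition~\ref{thm:mvpp-skip-normalization}, this time keeping only the assignments with $tc\models pf_i$. Factoring the common $exp(w_i)$ out of every surviving term gives
\[
\sum_{I\in\sm[\Pi],\,I\models pf_i}W_{\Pi}(I)=k\cdot exp(w_i)\!\!\sum_{\substack{tc\in TC_\Pi\\ tc\models pf_i}}\ \prod_{\substack{j\ne i\\ tc\models pf_j}}exp(w_j),
\]
and the key algebraic step is to observe that summing the inner product over all assignments to the remaining atoms $\{pf_j:j\ne i\}$ factorizes as $\prod_{j\ne i}(1+exp(w_j))$, exactly the distribute-over-independent-choices computation already done in Proposition~\ref{thm:mvpp-skip-normalization}.

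Finally I would divide by the normalization constant: the factor $k\cdot\prod_{j\ne i}(1+exp(w_j))$ cancels against the matching part of the denominator, leaving $P_{\Pi}(pf_i)=exp(w_i)/(1+exp(w_i))$, as claimed. I expect the only delicate point to be the bookkeeping in the grouping step --- justifying that each truth assignment contributes precisely $k$ equally weighted stable models (where simplicity and $k$-coherence are used) and that the residual sum factorizes cleanly; once that is in place, the result is pure cancellation.
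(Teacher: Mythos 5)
Your proposal is correct and follows essentially the same route as the paper's proof: both start from Proposition~\ref{thm:mvpp-skip-normalization}, group the stable models satisfying $pf_i$ by their truth assignment to $PF$ (each contributing $k$ equally weighted models by $k$-coherence), factor out the $exp(w_i)$ term, and use the factorization $\sum_{tc}\prod_{j\ne i, tc\models pf_j}exp(w_j)=\prod_{j\ne i}(1+exp(w_j))$ to cancel against the normalization constant. The only difference is cosmetic --- the paper pulls out $exp(w_i)/(exp(w_i)+1)$ up front and shows the residual sum equals $1$, whereas you cancel at the end.
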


\begin{proof}
By Proposition \ref{thm:mvpp-skip-normalization} we have
\begin{align}
\nonumber &Pr_{\bf \Pi}(pf_i) \\
\nonumber = & \underset{\substack{\text{$I$ is a stable model of ${\bf \Pi}$}\\I\vDash pf_i}}{\sum}\frac{\underset{I\vDash pf_j, pf_j\in PF}{\prod}e^{w_j}\cdot \underset{I\nvDash pf_j, pf_j\in PF}{\prod}e^0}{k\cdot \underset{pf_j\in PF}{\prod}(1+e^{w_j})}\\
\nonumber = &\frac{e^{w_i}}{e^{w_i} + 1} \cdot \underset{\substack{\text{$I$ is a stable model of ${\bf \Pi}$\ \ }\\I\vDash pf_i}}{\sum}\frac{\underset{I\vDash pf_j, pf_j\in PF,j\neq i}{\prod}e^{w_j}\cdot \underset{I\nvDash pf_j, pf_j\in PF}{\prod}e^0}{k\cdot \underset{\substack{pf_j\in PF\\j\neq i}}{\prod}(1+e^{w_j})}\\
\nonumber = &\frac{e^{w_i}}{e^{w_i} + 1} \cdot k\underset{\substack{\text{$I$ is a truth assignment to $PF\setminus\{pf_i\}$\ \ }\\}}{\sum}\frac{\underset{I\vDash pf_j, pf_j\in PF,j\neq i}{\prod}e^{w_j}\cdot \underset{I\nvDash pf_j, pf_j\in PF}{\prod}e^0}{k\cdot \underset{\substack{pf_j\in PF\\j\neq i}}{\prod}(1+e^{w_j})}\\
\nonumber = &\frac{e^{w_i}}{e^{w_i} + 1}\cdot \frac{\underset{\substack{\text{$I$ is a truth assignment to $PF\setminus\{pf_i\}$\ \ }\\}}{\sum}(\underset{I\vDash pf_j, pf_j\in PF,j\neq i}{\prod}e^{w_j}\cdot \underset{I\nvDash pf_j, pf_j\in PF}{\prod}e^0)}{ \underset{\substack{pf_j\in PF\\j\neq i}}{\prod}(1+e^{w_j})}\\
\nonumber = &\frac{e^{w_i}}{e^{w_i} + 1} \cdot 1\\
\nonumber = &\frac{e^{w_i}}{e^{w_i} + 1}.
\end{align}
\qed
\end{proof}

\noindent{\bf Theorem~\ref{cor:coherent-learning} \optional{cor:coherent-learning}}\
\ 
{\sl
For any $k$-coherent parameterized $\lpmln$ program $\langle PF, \Pi^{\rm hard}, {\bf w}\rangle$, and an interpretation $T$ as  the training data,  we have
\[
\underset{\bf w}{\rm argmax}\ P_{\langle PF, \Pi^{\rm hard}, {\bf w}\rangle}(T; {\bf w}) =(ln\frac{m_1(T)}{n_1(T)}, \dots, ln\frac{m_{|PF|}(T)}{n_{|PF|}(T)}).
\]
}
\begin{proof}
We have
\begin{align}
\nonumber &P_{\langle PF, \Pi^{\rm hard}, {\bf w}\rangle}(T; {\bf w})\\
\nonumber = & (\text{Proposition \ref{prop:sm-prob-from-tc}})\\
\nonumber  &\frac{1}{k}\underset{c_i\in PF}{\prod} Pr_{{\bf\Pi}}(c)^{m_i(I)} \cdot (1 - Pr_{{\bf\Pi}}(c_i))^{n_i(I)} \\
\nonumber = & (\text{Proposition \ref{prop:tc-independence}})\\
\nonumber  &  \frac{1}{k}\underset{c_i\in PF}{\prod} (\frac{exp(w_i)}{exp(w_i) + 1})^{m_i(I)} \cdot (1 - \frac{exp(w_i)}{exp(w_i) + 1})^{n_i(I)}
\end{align}
\begin{align}
\nonumber &lnP_{\langle PF, \Pi^{\rm hard}, {\bf w}\rangle}(T; {\bf w})\\
\nonumber = & ln\frac{1}{k} + \underset{c_i\in PF}{\sum} m_i(I)(w_i - ln(exp(w_i) + 1)) + \\
\nonumber & {n_i(I)}(ln 1 - ln (exp(w_i+1))
\end{align}

Since $P_{\langle PF, \Pi^{\rm hard}, {\bf w}\rangle}(T; {\bf w})$ is concave w.r.t. $w_i\in {\bf w}$, the value of $w_i$ that maximizes $P_{\langle PF, \Pi^{\rm hard}, {\bf w}\rangle}(T; {\bf w})$ can be obtained by solving
\[
\frac{\partial lnP_{\langle PF, \Pi^{\rm hard}, {\bf w}\rangle}(T; {\bf w})}{\partial w_i} = 0.
\]
For any $w_j\in {\bf w}$, we have
\begin{align}
\nonumber &\frac{\partial lnP_{\langle PF, \Pi^{\rm hard}, {\bf w}\rangle}(T; {\bf w})}{\partial w_j}\\
\nonumber =& m_j(I)(1-\frac{exp(w_j)}{exp(w_j) + 1}) - n_j(I)\frac{exp(w_j)}{exp(w_j)+1}
\end{align}
%\begin{align}
%\nonumber &\frac{\partial P_{\langle PF, \Pi^{\rm hard}, {\bf w}\rangle}(T; {\bf w})}{\partial w_i}\\
%\nonumber = &  \frac{1}{k}\underset{c_i\in PF, i\neq j}{\prod} (\frac{exp(w_i)}{exp(w_i) + 1})^{m_i(I)} \cdot (1 - \frac{exp(w_i)}{exp(w_i) + 1})^{n_i(I)} \cdot\\
%\nonumber & \ \ \ \ ((\frac{exp(w_j)}{exp(w_j) + 1})^{n_j(I)}\cdot \o{n_j}(I)\cdot (1-E)^{\o{n_j}(I)-1} + \\
%\nonumber & \ \ \ \ (1-\frac{exp(w_j)}{exp(w_j)+1})^{\o{n_j}(I)}\cdot n_j(I)\cdot E^{n_j(I)-1})
%\end{align}
%where
%\begin{align}
%\nonumber E = & \frac{\partial exp(w_j)/(exp(w_j) + 1)}{\partial w_j}\\
%\nonumber = & \frac{exp(w_j)}{exp(w_j) + 1} - \frac{exp(w_j)}{(exp(w_j) + 1)^2}.
%\end{align}
\vspace{0.2cm}
\[
\frac{\partial lnP_{\langle PF, \Pi^{\rm hard}, {\bf w}\rangle}(T; {\bf w})}{\partial w_i} = 0
\]
is equivalent to
\begin{align}
\nonumber & m_j(I)(1-\frac{exp(w_j)}{exp(w_j) + 1}) = n_j(I)\frac{exp(w_j)}{exp(w_j)+1} \\
\nonumber & \iff\\
\nonumber & \frac{e^{w_j}}{e^{w_j} + 1} = \frac{m_j}{m_j + n_j}\\
\nonumber & \iff\\
\nonumber & w_j = ln\frac{m_j}{n_j}
\end{align}

So we have
\[ 
{argmax} P_{\langle PF, \Pi^{\rm hard}, {\bf w}\rangle}(T; {\bf w}) = (ln\frac{m_1}{n_1}, \dots, ln\frac{m_{|PF|}}{n_{|PF|}}).
\]
\qed
\end{proof}

\BOCC
\noindent{\bf Theorem~\ref{cor:coherent-indepent-learning} \optional{cor:coherent-indepent-learning}}\
\ 
{\sl
For any coherent $\lpmln$ program template $\langle PF, \Pi^{\rm hard}\rangle$, any formula $O$ as the training data, and ${\bf W} = (w_1, \dots, w_n)$, we have
\begin{align}
\nonumber &\underset{\bf W}{argmax} P_{\langle PF, \Pi^{\rm hard}, {\bf W}\rangle}(O; {\bf W}) \\
\nonumber =& (\underset{w_1}{argmax} P_{\langle PF, \Pi^{\rm hard}, (w_1, w'_2, \dots, w'_n)\rangle}(O; w_1),\dots, \\
\nonumber &\underset{w_n}{argmax} P_{\langle PF, \Pi^{\rm hard}, (w'_1, \dots, w'_{n-1}, w_n)\rangle}(O; w_n))
\end{align}
where $w'_1, \dots, w'_n$ are any real numbers.
}

\begin{proof}
By Proposition \ref{prop:tc-independence} and Proposition \ref{prop:sm-prob-from-tc}, we have
\begin{align}
\nonumber &P_{\langle PF, \Pi^{\rm hard}, {\bf W}\rangle}(O; {\bf W})\\
\nonumber = & \underset{I\vDash O, I\in SM[{\bf \Pi}]}{\sum}\frac{1}{k\cdot \underset{pf_j\in PF}{\prod}(1+e^{w_j})}\underset{c_i\in PF}{\prod} (\frac{exp(w_i)}{exp(w_i) + 1})^{m_i(I)} \cdot (1 - \frac{exp(w_i)}{exp(w_i) + 1})^{n_i(I)} 
\end{align}
It can be seen that $P_{\langle PF, \Pi^{\rm hard}, {\bf W}\rangle}(O; {\bf W})$ is a concave function. For any $w_j\in{\bf W}$, the the value of $w_j$ that maximizes $P_{\langle PF, \Pi^{\rm hard}, {\bf W}\rangle}(O; {\bf W})$ can be obtained by solving
\[
\frac{\partial P_{\langle PF, \Pi^{\rm hard}, {\bf W}\rangle}(O; {\bf W})}{\partial w_j} = 0
\]
We have
{\footnotesize
\begin{align}
\nonumber &\frac{\partial P_{\langle PF, \Pi^{\rm hard}, {\bf W}\rangle}(O; {\bf W})}{\partial w_j} \\
\nonumber = &  \underset{I\vDash O, I\in SM[{\bf \Pi}]}{\sum}\frac{1}{k\cdot \underset{pf_j\in PF}{\prod}(1+e^{w_j})}\underset{c_i\in PF, i \neq j}{\prod} (\frac{exp(w_i)}{exp(w_i) + 1})^{m_i(I)} \cdot (1 - \frac{exp(w_i)}{exp(w_i) + 1})^{n_i(I)} \cdot\\
\nonumber &\ \ \ ((\frac{exp(w_j)}{exp(w_j) + 1})^{n_j(I)}\cdot \o{n_j}(I)\cdot (1-E)^{\o{n_j}(I)-1} + \\
\nonumber & \ \ \ \ (1-\frac{exp(w_j)}{exp(w_j)+1})^{\o{n_j}(I)}\cdot n_j(I)\cdot E^{n_j(I)-1})
\end{align}
}
where
\begin{align}
\nonumber E =&  \frac{\partial exp(w_j)/(exp(w_j) + 1)}{\partial w_j}\\
\nonumber = & \frac{exp(w_j)}{exp(w_j) + 1} - \frac{exp(w_j)}{(exp(w_j) + 1)^2}.
\end{align}
Since 
\[
 \frac{1}{k\cdot \underset{pf_j\in PF}{\prod}(1+e^{w_j})}\underset{c_i\in PF, i \neq j}{\prod} (\frac{exp(w_i)}{exp(w_i) + 1})^{m_i(I)} \neq 0,
\]
and
\begin{align}
\nonumber &\ \ \ (\frac{exp(w_j)}{exp(w_j) + 1})^{n_j(I)}\cdot \o{n_j}(I)\cdot (1-E)^{\o{n_j}(I)-1} + \\
\nonumber & \ \ \ \ (1-\frac{exp(w_j)}{exp(w_j)+1})^{\o{n_j}(I)}\cdot n_j(I)\cdot E^{n_j(I)-1} \geq 0,
\end{align}
\[
\frac{\partial P_{\langle PF, \Pi^{\rm hard}, {\bf W}\rangle}(O; {\bf W})}{\partial w_j} = 0
\]
implies
\begin{align}
\nonumber &\ \ \ (\frac{exp(w_j)}{exp(w_j) + 1})^{n_j(I)}\cdot \o{n_j}(I)\cdot (1-E)^{\o{n_j}(I)-1} + \\
\nonumber & \ \ \ \ (1-\frac{exp(w_j)}{exp(w_j)+1})^{\o{n_j}(I)}\cdot n_j(I)\cdot E^{n_j(I)-1}\\
\nonumber &= 0
\end{align}
for all $I\vDash O$. Since this equation does not involve $w_i$ for any $i\neq j$, the solution does not depend on any $w_i$ for any $i\neq j$. Consequently, we have
\begin{align}
\nonumber &\underset{\bf W}{argmax} P_{\langle PF, \Pi^{\rm hard}, {\bf W}\rangle}(O; {\bf W}) \\
\nonumber =& (\underset{w_1}{argmax} P_{\langle PF, \Pi^{\rm hard}, (w_1, w'_2, \dots, w'_n)\rangle}(O; w_1),\dots, \\
\nonumber &\underset{w_n}{argmax} P_{\langle PF, \Pi^{\rm hard}, (w'_1, \dots, w'_{n-1}, w_n)\rangle}(O; w_n)).
\end{align}
\qed
\end{proof}
\EOCC

%--------------------------------------------------------------------------------------------
\section{{\sc lpmln2asp} Encodings of Examples}\label{sec:encodings}
%--------------------------------------------------------------------------------------------

\subsection{Virus Transmitting Example}
The virus transmitting example discuss in Section \ref{ssec:learn-hypothesis} has the input program template:
\begin{lstlisting}
#domain person(X).
#domain person(Y).

@getWeight(1) has_disease(X) :- carries_virus(X).
@getWeight(2) carries_virus(Y) :- contact(X, Y), carries_virus(X).

person("A"; "B"; "C"; "D"; "E"; "F"; "G"; "H").

carries_virus("A").
contact("A", "B").
contact("B", "A").
contact("A", "E").
contact("E", "A").
contact("A", "F").
contact("F", "A").
contact("B", "C").
contact("C", "B").
contact("C", "D").
contact("D", "C").
contact("E", "F").
contact("F", "E").
contact("G", "H").
contact("H", "G").
contact("G", "A").
contact("A", "G").
\end{lstlisting}
and the following training data:
\begin{lstlisting}
:- not carries_virus("A").
:- not carries_virus("E").
:- not carries_virus("F").
:- not carries_virus("G").
:- carries_virus("B").
:- carries_virus("C").
:- carries_virus("D").
:- carries_virus("H").

:- not has_disease("A").
:- not has_disease("E").
:- has_disease("B").
:- has_disease("C").
:- has_disease("D").
:- has_disease("F").
:- has_disease("G").
:- has_disease("H").
\end{lstlisting}
The commandline
\begin{lstlisting}
python code/learn-mcsat-em.py benchmarks/virus/virus.lpmln benchmarks/virus/virus-evid.txt
\end{lstlisting}
executes gradient ascent learning with MC-ASP for 50 learning iterations and 50 MC-ASP iterations each learning iteration, which yields
\begin{lstlisting}
New weights:
Rule 1:  0.612
Rule 2:  0.574
\end{lstlisting}

\subsection{Communication Network Example}
The 10-nodes version of the communication network example discuss in Section \ref{ssec:learn-reachability} has the input program template:
\begin{lstlisting}
node(1..10).
session(1..4).
#domain session(T).
@getWeight(1) fail(1, T).
@getWeight(2) fail(2, T).
@getWeight(3) fail(3, T).
@getWeight(4) fail(4, T).
@getWeight(5) fail(5, T).
@getWeight(6) fail(6, T). 
@getWeight(7) fail(7, T).
@getWeight(8) fail(8, T).
@getWeight(9) fail(9, T).
@getWeight(10) fail(10, T).

edge(1, 2).
edge(1, 4).
edge(2, 3).
edge(4, 5).
edge(4, 6).
edge(3, 7).
edge(6, 7).
edge(5, 7).
edge(3, 8).
edge(6, 10).
edge(5, 9).

connected(X, Y, T) :- edge(X, Y), not fail(X, T), not fail(Y, T).
connected(X, Y, T) :- connected(X, Z, T), connected(Z, Y, T).
\end{lstlisting}
and the following training data:
\begin{lstlisting}
:- not connected(1, 7, 1).
:- connected(1, 8, 1).
:- not connected(1, 9, 1).
:- connected(1, 10, 1).

:- not connected(1, 7, 2).
:- not connected(1, 8, 2).
:- connected(1, 9, 2).
:- not connected(1, 10, 2).

:- not connected(1, 7, 3).
:- connected(1, 8, 3).
:- connected(1, 9, 3).
:- not connected(1, 10, 3).

:- connected(1, 7, 4).
:- not connected(1, 8, 4).
:- not connected(1, 9, 4).
:- not connected(1, 10, 4).
\end{lstlisting}

The commandline
\begin{lstlisting}
python code/learn-mcsat-em.py benchmarks/network/network.lpmln benchmarks/network/network-evid.txt
\end{lstlisting}
executes gradient ascent learning with MC-ASP for 50 learning iterations and 50 MC-ASP iterations each learning iteration, which yields
\begin{lstlisting}
New weights:
Rule 1:  -1.634
Rule 2:  1.148
Rule 3:  0.846
Rule 4:  -1.74
Rule 5:  0.184
Rule 6:  -1.0
Rule 7:  0.204
Rule 8:  0.12
Rule 9:  0.008
Rule 10:  -1.2
\end{lstlisting}

\subsection{Robot Example}
The robot discuss in Section \ref{ssec:learn-action} has the input program template:
\begin{lstlisting}
astep(0).
step(0..1).
boolean("t"; "f").
room("r1"; "r2").
instance(1..4).

#domain astep(AI).
#domain instance(ID).

% Probability Distribution
%% Entering a room fails at probability 0.2
@getWeight(1) pf1(AI, ID).
ab("enter_failed", I, ID) :- pf1(I, ID), ab(I, ID).
%% The robot drops the book at probability 0.1
@getWeight(2) pf2(AI, ID).
ab("drop_book", I, ID) :- pf2(I, ID), ab(I, ID).
%% Picking up fails at probability 0.3
@getWeight(3) pf3(AI, ID).
ab("pickup_failed", I, ID) :- pf3(I, ID), ab(I, ID).


% UEC
%% Fluents
:- not loc_robot("r1", I, ID), not loc_robot("r2", I, ID), step(I), instance(ID).
:- loc_robot("r1", I, ID), loc_robot("r2", I, ID), step(I), instance(ID).
:- not loc_book("r1", I, ID), not loc_book("r2", I, ID), step(I), instance(ID).
:- loc_book("r1", I, ID), loc_book("r2", I, ID), step(I), instance(ID).
:- not hasBook("t", I, ID), not hasBook("f", I, ID), step(I), instance(ID).
:- hasBook("t", I, ID), hasBook("f", I, ID), step(I), instance(ID).
%% Actions
:- not goto(R, "t", I, ID), not goto(R, "f", I, ID), astep(I), room(R), instance(ID).
:- goto(R, "t", I, ID), goto(R, "f", I, ID), astep(I), room(R), instance(ID).
:- not pickup_book("t", I, ID), not pickup_book("f", I, ID), astep(I), instance(ID).
:- pickup_book("t", I, ID), pickup_book("f", I, ID), astep(I), instance(ID).
:- not putdown_book("t", I, ID), not putdown_book("f", I, ID), astep(I), instance(ID).
:- putdown_book("t", I, ID), putdown_book("f", I, ID), astep(I), instance(ID).

% Effect of Actions
loc_robot(R, I+1, ID) :- goto(R, "t", I, ID), not ab("enter_failed", I, ID), instance(ID).
loc_book(R, I, ID) :- loc_robot(R, I, ID), hasBook("t", I, ID), instance(ID).
hasBook("t", I+1, ID) :- pickup_book("t", I, ID), loc_robot(R, I, ID), loc_book(R, I, ID), 
                         not ab("pickup_failed", I, ID), instance(ID).
hasBook("f", I+1, ID) :- putdown_book("t", I, ID), instance(ID).
hasBook("f", I+1, ID) :- ab("drop_book", I, ID), instance(ID).

% Frame Axioms
loc_robot(R, I+1, ID) :- loc_robot(R, I, ID), astep(I), instance(ID), not not loc_robot(R, I+1, ID).
loc_book(R, I+1, ID) :- loc_book(R, I, ID), astep(I), instance(ID), not not loc_book(R, I+1, ID).
hasBook(B, I+1, ID) :- hasBook(B, I, ID), astep(I), instance(ID), not not hasBook(B, I+1, ID).

% No Concurrency
:- goto(R1, "t", I, ID), goto(R2, "t", I, ID), astep(I), instance(ID), R1 != R2.
:- goto(R, "t", I, ID), pickup_book("t", I, ID), room(R), astep(I), instance(ID).
:- goto(R, "t", I, ID), putdown_book("t", I, ID), room(R), astep(I), instance(ID).
:- pickup_book("t", I, ID), putdown_book("t", I, ID), astep(I), instance(ID).

% Initial state and actions are exogenous
loc_robot("r1", 0, ID) :- instance(ID), not loc_robot("r2", 0, ID).
loc_robot("r2", 0, ID) :- instance(ID), not loc_robot("r1", 0, ID).

loc_book("r1", 0, ID) :- instance(ID), not loc_book("r2", 0, ID).
loc_book("r2", 0, ID) :- instance(ID), not loc_book("r1", 0, ID).

hasBook("t", 0, ID) :- instance(ID), not hasBook("f", 0, ID).
hasBook("f", 0, ID) :- instance(ID), not hasBook("t", 0, ID).

goto(R, "t", I, ID) :- room(R), astep(I), instance(ID), not goto(R, "f", I, ID).
goto(R, "f", I, ID) :- room(R), astep(I), instance(ID), not goto(R, "t", I, ID).

pickup_book("t", I, ID) :- astep(I), instance(ID), not pickup_book("f", I, ID).
pickup_book("f", I, ID) :- astep(I), instance(ID), not pickup_book("t", I, ID).

putdown_book("t", I, ID) :- astep(I), instance(ID), not putdown_book("f", I, ID).
putdown_book("f", I, ID) :- astep(I), instance(ID), not putdown_book("t", I, ID).

ab(I, ID) :- instance(ID), astep(I).
\end{lstlisting}
and the following training data:
\begin{lstlisting}
:- not loc_robot("r1", 0, 1).
:- not loc_book("r2", 0, 1).
:- not hasBook("f", 0, 1).
:- not goto("r2", "t", 0, 1).
:- not loc_robot("r1", 1, 1).

:- not loc_robot("r1", 0, 2).
:- not loc_book("r2", 0, 2).
:- not hasBook("f", 0, 2).
:- not goto("r2", "t", 0, 2).
:- not loc_robot("r2", 1, 2).

:- not loc_robot("r1", 0, 3).
:- not loc_book("r2", 0, 3).
:- not hasBook("f", 0, 3).
:- not goto("r2", "t", 0, 3).
:- not loc_robot("r2", 1, 3).

:- not loc_robot("r1", 0, 4).
:- not loc_book("r2", 0, 4).
:- not hasBook("f", 0, 4).
:- not goto("r2", "t", 0, 4).
:- not loc_robot("r2", 1, 4).

:- not loc_robot("r1", 0, 5).
:- not loc_book("r1", 0, 5).
:- not hasBook("f", 0, 5).
:- not pickup_book("t", 0, 5).
:- not hasBook("f", 1, 5).

:- not loc_robot("r1", 0, 6).
:- not loc_book("r1", 0, 6).
:- not hasBook("f", 0, 6).
:- not pickup_book("t", 0, 6).
:- not hasBook("f", 1, 6).

:- not loc_robot("r1", 0, 7).
:- not loc_book("r1", 0, 7).
:- not hasBook("f", 0, 7).
:- not pickup_book("t", 0, 7).
:- not hasBook("t", 1, 7).

:- not loc_robot("r1", 0, 8).
:- not loc_book("r1", 0, 8).
:- not hasBook("f", 0, 8).
:- not pickup_book("t", 0, 8).
:- not hasBook("t", 1, 8).

:- not loc_robot("r1", 0, 9).
:- not hasBook("t", 0, 9).
:- not hasBook("f", 1, 9).

:- not loc_robot("r1", 0, 10).
:- not hasBook("t", 0, 10).
:- not hasBook("t", 1, 10).

:- not loc_robot("r1", 0, 11).
:- not hasBook("t", 0, 11).
:- not hasBook("t", 1, 11).

:- not loc_robot("r1", 0, 12).
:- not hasBook("t", 0, 12).
:- not hasBook("t", 1, 12).
\end{lstlisting}

The commandline
\begin{lstlisting}
python code/learn-mcsat-em.py benchmarks/robot/robot.lpmln benchmarks/robot/robot-evid.txt
\end{lstlisting}
executes gradient ascent learning with MC-ASP for 50 learning iterations and 50 MC-ASP iterations each learning iteration, which yields
\begin{lstlisting}
New weights:
Rule 1: -1.084
Rule 2:  -1.064
Rule 3:  -0.068
\end{lstlisting}

\BOCC
%--------------------------------------------------------------------------------------------
\subsection{Comparison between ProbLog and Our Prototype System on Network Failure Example}
%--------------------------------------------------------------------------------------------
ProbLog's performance on weight learning depends on the tightness of the input program. We observed that for many tight programs, Problog appears to have better scalability than our prototype lpmln weight learning system. However, Problog does not have a consistently reasonable performance on non-tight programs, such as the program in \ref{ssec:learn-reachability}, possibly due to that it has to convert the input program into weighted Boolean formulas, which is expensive for non-tight programs.  We can identify many graph instances of the network failure example discussed in Section \ref{ssec:learn-reachability} where our prototype system outperforms Problog, as the density of the graph gets higher. For example, consider the graph in Figure 1. With the nodes fixed, as we add more edges to the graph, we eventually hit a point when Problog does not return result within a reasonable time.
 %(even with number of iteration set to be 1).

Below is the statistics of several instances.

\bigskip

{\footnotesize
\centering
\begin{tabular}{| c | c c |}
\hline
 {\bf \# Edges} & {\bf {\sc lpmln-learn}} & {\bf {\sc ProbLog}} \\
 \hline
 $10 (Original)$ & 351.237s & 2.565s \\
 $14$ & 476.656s & 2.854s \\
 $15$ & 740.656s & $>$ 20 min \\
 $20$ & 484.348s & $>$ 20 min \\
$40$ & 304.407s & $>$ 20 min \\
 \hline
\end{tabular}
}

\bigskip

The input files to Problog consists of two parts: edge lists and the part that defines the node failure rates and connectivity. The latter is shown below:
\begin{lstlisting}
t(_)::fail(1).
t(_)::fail(2).
t(_)::fail(3).
t(_)::fail(4).
t(_)::fail(5).
t(_)::fail(6).
t(_)::fail(7).
t(_)::fail(8).
t(_)::fail(9).
t(_)::fail(10).

connected(X, Y) :- edge(X, Y), not fail(X), not fail(Y).
connected(X, Y) :- connected(X, Z), connected(Z, Y).
\end{lstlisting}

The edge list differs from instance to instance, as an example, the edge list for the instance with 40 edges is
\begin{lstlisting}
edge(1, 2).
edge(1, 4).
edge(2, 3).
edge(4, 5).
edge(4, 6).
edge(3, 7).
edge(6, 7).
edge(5, 7).
edge(3, 8).
edge(6, 10).
edge(5, 9).
edge(1, 3).
edge(1, 5).
edge(1, 6).
edge(1, 7).
edge(1, 8).
edge(1, 9).
edge(1, 10).
edge(2, 4).
edge(2, 5).
edge(2, 6).
edge(2, 7).
edge(2, 8).
edge(2, 9).
edge(2, 10).
edge(3, 4).
edge(3, 5).
edge(3, 6).
edge(3, 8).
edge(3, 9).
edge(3, 10).
edge(4, 1).
edge(4, 2).
edge(4, 3).
edge(4, 4).
edge(4, 7).
edge(4, 8).
edge(4, 9).
edge(4, 10).
\end{lstlisting}

The command line is
\begin{lstlisting}
problog lfi prog.pl evid.pl -O prog-learned.pl -n 1
\end{lstlisting}
 
The difference between the instance with 14 edges and 15 edges is that one edge {\tt edge(4, 1)} is added to the one with 15 edges. Together with {\tt edge(1, 4)}, it forms a cycle in the graph. 

\subsection{Comparison between MLN and $\lpmln$ weights learning on the Virus Example}
For the program in Section \ref{ssec:learn-hypothesis}, we can perform weight learning under both $\lpmln$ and MLN, and compare the marginal probability of people carrying virus under $\lpmln$ and MLN, with respective weight learned. Consider a specific instance of the problem, for which the contact relation is shown as the following graph
\begin{center}
\includegraphics[width=0.5\linewidth]{Virus-ijcai-rebuttal.png}
\end{center}
where {\tt A} is the person who initially carries the virus and triangle shaped nodes represents people who carries virus in the evidence. Note that the cluster consists of {\tt E}, {\tt F}, {\tt G} has no contact with the cluster consists of {\tt A}, {\tt B}, {\tt C}, {\tt D}.
For MLN learning, we use Alchemy, below is the result.

\vspace{0.2cm}
{\footnotesize
\centering
\begin{tabular}{| c | c c c|}
\hline
 {\bf Person} & {\bf {\sc MLN}} & {\bf $\lpmln$} & carries\_virus (ground truth)\\
 \hline
 $B$ & 0.823968 & 0.6226904833 & Y \\
 $C$ & 0.813969 & 0.6226904833 & Y \\
 $D$ & 0.818968 & 0.6226904833 & N \\
 $E$ & 0.688981 & 0 & N \\
$F$ & 0.680982 & 0 & N \\
$G$ & 0.680982 & 0 & N \\
 \hline
\end{tabular}
}

\vspace{0.2cm}

As can be seen from the table, under MLN, each of {\tt E}, {\tt F}, {\tt G} has a unreasonably high probability of carrying virus regardless of the fact that they are not even contacted with any one carrying the virus. 
\EOCC

\end{document}